\appto\TPTnoteSettings{\footnotesize}
\newcommand{\algmargin}{\the\ALG@thistlm}
\algnewcommand{\parState}[1]{\State%
  \parbox[t]{\dimexpr\linewidth-\algmargin}{\strut {\setstretch{0.5}\hspace{-2.3pt} #1\par
  \vspace{-\prevdepth}
  \vspace{-10pt}
  }\strut}
}
\algrenewcommand\alglinenumber[1]{{\sffamily\scriptsize#1}}
\xpatchcmd{\algorithmic}{\itemsep\z@}{\itemsep=0pt}{}{}
\newcommand{\ALGtikzmarkcolor}{black}
\newcommand{\ALGtikzmarkextraindent}{4pt}
\newcommand{\ALGtikzmarkverticaloffsetstart}{-.5ex}
\newcommand{\ALGtikzmarkverticaloffsetend}{-.5ex}
\newcounter{ALG@tikzmark@tempcnta}
\newcommand\ALG@tikzmark@start{%
    \global\let\ALG@tikzmark@last\ALG@tikzmark@starttext%
    \expandafter\edef\csname ALG@tikzmark@\theALG@nested\endcsname{\theALG@tikzmark@tempcnta}%
    \tikzmark{ALG@tikzmark@start@\csname ALG@tikzmark@\theALG@nested\endcsname}%
    \addtocounter{ALG@tikzmark@tempcnta}{1}%
}
\def\ALG@tikzmark@starttext{start}
\newcommand\ALG@tikzmark@end{%
    \ifx\ALG@tikzmark@last\ALG@tikzmark@starttext
    \else
        \tikzmark{ALG@tikzmark@end@\csname ALG@tikzmark@\theALG@nested\endcsname}%
        \tikz[overlay,remember picture] \draw[\ALGtikzmarkcolor] let \p{S}=($(pic cs:ALG@tikzmark@start@\csname ALG@tikzmark@\theALG@nested\endcsname)+(\ALGtikzmarkextraindent,\ALGtikzmarkverticaloffsetstart)$), \p{E}=($(pic cs:ALG@tikzmark@end@\csname ALG@tikzmark@\theALG@nested\endcsname)+(\ALGtikzmarkextraindent,\ALGtikzmarkverticaloffsetend)$) in (\x{S},\y{S})--(\x{S},\y{E});%
    \fi
    \gdef\ALG@tikzmark@last{end}%
}
\apptocmd{\ALG@beginblock}{\ALG@tikzmark@start}{}{\errmessage{failed to patch}}
\pretocmd{\ALG@endblock}{\ALG@tikzmark@end}{}{\errmessage{failed to patch}}
\pgfplotsset{compat=1.17}
\newcommand{\x}{\bm{x}} 
\def\eqref#1{Eqn.~\ref{#1}}
\def\eqref#1{equation~\ref{#1}}
\def\1{\bm{1}}
\def\rmE{{\mathbf{E}}}
\def\vone{{\bm{1}}}
\def\va{{\bm{a}}}
\def\vu{{\bm{u}}}
\def\vv{{\bm{v}}}
\def\vz{{\bm{z}}}
\DeclareMathAlphabet{\mathsfit}{\encodingdefault}{\sfdefault}{m}{sl}
\SetMathAlphabet{\mathsfit}{bold}{\encodingdefault}{\sfdefault}{bx}{n}
\def\gH{{\mathcal{H}}}
\def\gL{{\mathcal{L}}}
\def\gP{{\mathcal{P}}}
\def\sP{{\mathbb{P}}}
\def\sR{{\mathbb{R}}}
\def\sZ{{\mathbb{Z}}}
\DeclareMathOperator*{\argmax}{arg\,max}
\newtheoremstyle{break}
  {}
  {}
  {\itshape}
  {}
  {\bfseries}
  {}
  {\newline}
  {}
\newtheorem{theorem}{Theorem}
\newtheorem{lemma}{Lemma}
\providecommand{\customgenericname}{}
\newcommand{\newcustomtheorem}[2]{%
  \newenvironment{#1}[1]
  {%
   \renewcommand\customgenericname{#2}%
   \renewcommand\theinnercustomgeneric{##1}%
   \innercustomgeneric
  }
  {\endinnercustomgeneric}
}
\newcommand{\cm}{\textbf}
\newcounter{todocounter}
\newcommandx{\plan}[2][1=]{\stepcounter{todocounter}\todo[linecolor=blue,backgroundcolor=blue!25,bordercolor=blue,#1]{\thetodocounter: #2}}
\newcommandx{\change}[2][1=]{\stepcounter{todocounter}\todo[linecolor=red,backgroundcolor=red!25,bordercolor=red,#1]{\thetodocounter: #2}}
\newcommandx{\unsure}[2][1=]{\stepcounter{todocounter}\todo[linecolor=yellow,backgroundcolor=yellow!25,bordercolor=yellow,#1]{\thetodocounter: #2}}
\newcommandx{\info}[2][1=]{\stepcounter{todocounter}\todo[linecolor=green,backgroundcolor=green!25,bordercolor=green,#1]{\thetodocounter: #2}}
\renewcommand*\env@matrix[1][\arraystretch]{%
  \edef\arraystretch{#1}%
  \hskip -\arraycolsep
  \let\@ifnextchar\new@ifnextchar
  \array{*\c@MaxMatrixCols c}}
\newcommand*\bigcdot{\mathpalette\bigcdot@{.5}}
\newcommand*\bigcdot@[2]{\mathbin{\vcenter{\hbox{\scalebox{#2}{$\m@th#1\bullet$}}}}}
\renewcommand{\emph}[1]{\textit{#1}}
\newcommand{\wt}[1]{\widetilde{#1}}
\newcommand{\idk}[1]{\mathbbm{1}\{#1\}}
\newcommand{\concat}{\parallel}
\newcommand{\tran}{^{\mkern-1.5mu\mathsf{T}}}
\newcommand{\nonlinear}{\sigma}
\newcommand{\fC}{\mathfrak{C}}
\newcommand{\itand}{~~{\footnotesize\textsf{and}}~~}
\newcommand{\countsketch}{\mathsf{CS}}
\newcommand{\tensorsketch}{\mathsf{TS}}
\newcommand{\fft}{\mathsf{FFT}}
\newcommand{\ifft}{\mathsf{FFT}^{-1}}
\newcommand{\ip}[2]{\langle#1,#2\rangle}
\newcommand{\similarity}[1]{\mathsf{Sim}\left(#1\right)}
\newcommand{\median}{\mathsf{Med}}
\newcommand{\onevec}{\mathbf{1}}
\title{Sketch-GNN: Scalable Graph Neural Networks\\
with Sublinear Training Complexity}
\author[ ]{Mucong Ding}
\author[ ]{Tahseen Rabbani}
\author[ ]{Bang An}
\author[ ]{Evan Z Wang}
\author[ ]{Furong Huang}
\affil[ ]{Department of Computer Science, University of Maryland}
\affil[ ]{\textit {\{mcding, trabbani, bangan, furongh\}@cs.umd.edu}}
\begin{document}

\maketitle

\begin{abstract}
Graph Neural Networks (GNNs) are widely applied to graph learning problems such as node classification. When scaling up the underlying graphs of GNNs to a larger size, we are forced to either train on the complete graph and keep the full graph adjacency and node embeddings in memory (which is often infeasible) or mini-batch sample the graph (which results in exponentially growing computational complexities with respect to the number of GNN layers). Various sampling-based and historical-embedding-based methods are proposed to avoid this exponential growth of complexities. However, none of these solutions eliminates the linear dependence on graph size. This paper proposes a sketch-based algorithm whose training time and memory grow sublinearly with respect to graph size by training GNNs atop a few compact sketches of graph adjacency and node embeddings. Based on polynomial tensor-sketch (PTS) theory, our framework provides a novel protocol for sketching non-linear activations and graph convolution matrices in GNNs, as opposed to existing methods that sketch linear weights or gradients in neural networks. In addition, we develop a locality sensitive hashing (LSH) technique that can be trained to improve the quality of sketches. Experiments on large-graph benchmarks demonstrate the scalability and competitive performance of our Sketch-GNNs versus their full-size GNN counterparts.
\end{abstract}

\section{Introduction}
\label{sec:intro}


Graph Neural Networks (GNNs) have achieved state-of-the-art graph learning in numerous applications, including classification~\citep{kipf2016semi}, clustering~\citep{bianchi2020spectral}, recommendation systems~\citep{wu2020graph}, social networks~\citep{fan2019graph} and more, through representation learning of target nodes using information aggregated from neighborhoods in the graph.
The manner in which GNNs utilize graph topology, however, makes it challenging to scale learning to larger graphs or deeper models with desirable computational and memory efficiency.
Full-batch training that stores the Laplacian of the complete graph suffers from a memory complexity of $\mathcal{O}(m+ndL+d^2L)$ on an $n$-node, $m$-edge graph with node features of dimension $d$ when employing an $L$-layer graph convolutional network (GCN). 
This linear memory complexity dependence on $n$ and the limited memory capacity of GPUs make it difficult to train on large graphs with millions of nodes or more.
As an example, the \textit{MAG240M-LSC} dataset~\citep{hu2021ogb} is a node classification benchmark with over 240 million nodes that takes over 202 GB of GPU memory when fully loaded.


To address the memory constraints, two major lines of research are proposed: (1) Sampling-based approaches~\citep{hamilton2017inductive, chen2018stochastic, chen2018fastgcn, chiang2019cluster, zeng2019graphsaint} based on the idea of implementing message passing only between the neighbors within a sampled mini-batch; (2) Historical-embedding based techniques, such as GNNAutoScale~\citep{fey2021gnnautoscale} and VQ-GNN~\citep{ding2021vq}), which maintain the expressive power of GNNs on sampled subgraphs using historical embeddings.
However, all of these methods require the number of mini-batches to be proportional to the size of the graph for fixed memory consumption.
In other words, they significantly increase computational time complexity in exchange for memory efficiency when scaling up to large graphs.
For example, training a 4-layer GCN with just 333K parameters (1.3 MB) for 500 epochs on \textit{ogbn-papers100M} can take more than 2 days on a powerful AWS p4d.24x large instance~\citep{hu2021ogb}.


We seek to achieve efficient training of GNNs with time and memory complexities sublinear in graph size without significant accuracy degradation.
Despite the difficulty of this goal, it should be achievable given that (1) the number of learnable parameters in GNNs is independent of the graph size, and (2) training may not require a traversal of all local neighborhoods on a graph but rather only the most representative ones (thus sublinear in graph size) as some neighborhoods may be very similar.
In addition, commonly-used GNNs are typically small and shallow with limited model capacity and expressive power, indicating that a modest proportion of data may suffice.


This paper presents \textit{Sketch-GNN}, a framework for training GNNs with sublinear time and memory complexity with respect to graph size.
Using the idea of sketching, which maps high-dimensional data structures to a lower dimension through entry hashing, we sketch the $n\times n$ adjacency matrix and the $n\times d$ node feature matrix to a few $c\times c$ and $c\times d$ sketches respectively before training, where $c$ is the sketch dimension.
While most existing literature focuses on sketching linear weights or gradients, we introduce a method for sketching non-linear activation units using polynomial tensor sketch theory~\citep{han2020polynomial}.
This preserves prediction accuracy while avoiding the need to ``unsketch'' back to the original high dimensional graph-node space $n$, thereby eliminating the dependence of training complexity on the underlying graph size $n$.
Moreover, we propose to learn and update the sketches in an online manner using learnable locality-sensitive hashing (LSH)~\citep{chen2020mongoose}.
This reduces the performance loss by adaptively enhancing the sketch quality while incurring minor overhead sublinear in the graph size.
In practice, we find that the sketch-ratio $c/n$ required to maintain ``full-graph'' model performance drops as $n$ increases; as a result, our Sketch-GNN enjoys sublinear training scalability.


\textit{Sketch-GNN} applies sketching techniques to GNNs to achieve training complexity sublinear to the \textit{data} size. 
This is fundamentally different from the few existing works which sketch the weights or gradients~\citep{liu2021exact,chen2015compressing, kasiviswanathan2018network, lin2019towards, spring2019compressing} to reduce the memory footprint of the model and speed up optimization.
To the best of our knowledge, \textit{Sketch-GNN} is the first sub-linear complexity training algorithm for GNNs, based on LSH and tensor sketching.
The sublinear efficiency obtained applies to various types of GNNs, including GCN~\citep{kipf2016semi} and GraphSAGE~\citep{hamilton2017inductive}.
Compared to the data compression approach~\citep{huang2021scaling,jin2021graph}, which compresses the input graph to a smaller one with fewer nodes and edges before training, our Sketch-GNN is advantageous since it does not suffer from an extremely long preprocessing time (which renders the training speedups meaningless) and performs much better across GNN types/architectures.


The remainder of this paper is organized as follows. \cref{sec:preliminaries} summarizes the notions and preliminaries of GNNs and sketching. \cref{sec:sketching} describes how to approximate the GNN operations on the full graph topology with sketches. \cref{sec:hashing} introduces potential drawbacks of using fixed sketches and develops algorithms for updating sketches using learnable LSHs. In \cref{sec:related}, we compare our approach to the graph compression approach and other GNN scalability methods. In \cref{sec:experiments}, we report the performance and efficiency of \textit{Sketch-GNNs} as well as several proof-of-concept and ablation experiments. Finally, \cref{sec:conclusions} concludes this paper with a summary of limitations, future directions, and broader impacts.

\section{Preliminaries}
\label{sec:preliminaries}


\cm{Basic Notations.}
Consider a graph with $n$ nodes and $m$ edges.
Connectivity is given by the adjacency matrix $A\in\{0,1\}^{n\times n}$ and features on nodes are represented by the matrix $X\in\sR^{n\times d}$, where $d$ is the number of features.
Given a matrix $C$, let $C_{i,j}$, $C_{i,:}$, and $C_{:,j}$ denote its $(i,j)$-th entry, $i$-th row, and $j$-th column, respectively.
$\odot$ denotes the element-wise (Hadamard) product, whereas $C^{\odot k}$ represents the $k$-th order element-wise power.
$\|\cdot\|_F$ is the symbol for the Frobenius norm.
$I_n\in\sR^{n\times n}$ denotes the identity matrix, whereas $\vone_n\in\sR^n$ is the vector whose elements are all ones.
$\median\{\cdot\}$ represents the element-wise median over a set of matrices.
Superscripts are used to indicate multiple instances of the same kind of variable; for instance, $X^{(l)}\in\sR^{n\times d_l}$ are the node representations on layer $l$.


\cm{Unified Framework of GNNs.}
A \emph{Graph Neural Network (GNN)} layer receives the node representation of the preceding layer $X^{(l)}\in\sR^{n\times d}$ as input and outputs a new representation $X^{(l+1)}\in \sR^{n\times d}$, where $X=X^{(0)}\in \sR^{n\times d}$ are the input features.
Although GNNs are designed following different guiding principles, such as neighborhood aggregation (GraphSAGE), spatial convolution (GCN), self-attention (GAT), and Weisfeiler-Lehman (WL) alignment (GIN~\citep{xu2018powerful}), the great majority of GNNs can be interpreted as performing message passing on node features, followed by feature transformation and an activation function.
The update rule of these GNNs can be summarized as~\citep{ding2021vq}
\begin{equation}
\label{eq:gnn}
X^{(l+1)} = \nonlinear\Big(\sum\nolimits_{q} C^{(q)}X^{(l)}W^{(l,q)}\Big).
\end{equation}
Where $C^{(q)}\in\sR^{n\times n}$ denotes the $q$-th convolution matrix that defines the message passing operator, $q\in\sZ_+$ is index of convolution, $\nonlinear(\cdot)$ is some choice of nonlinear activation function, and $W^{(l,q)}\in\sR^{d_l\times d_{l+1}}$ denotes the learnable linear weight matrix for the $l$-th layer and $q$-th filter.
GNNs under this paradigm differ from each other by their choice of convolution matrices $C^{(q)}$, which can be either fixed (GCN and GraphSAGE) or learnable (GAT).
In~\Cref{apd:gnns}, we re-formulate a number of well-known GNNs under this framework.
Unless otherwise specified, we assume $q=1$ and $d=d_l$ for every layer $l\in[L]$ for notational convenience.


\cm{Count Sketch and Tensor Sketch.} 
\textbf{(1)} \emph{Count sketch}~\citep{charikar2002finding, weinberger2009feature} is an efficient dimensionality reduction method that projects an $n$-dimensional vector $\vu$ into a smaller $c$-dimensional space using a random hash table $h:[n]\to[c]$ and a binary Rademacher variable $s:[n]\to\{\pm1\}$, where $[n]=\{1,\ldots,n\}$.
Count sketch is defined as $\countsketch(\vu)_i=\sum_{h(j)=i} s(j) \vu_j$, which is a linear transformation of $\vu$, i.e., $\countsketch(\vu)=R\vu$.
Here, $R\in\sR^{c\times n}$ denotes the so-called \emph{count sketch matrix}, which has exactly one non-zero element per column.
\textbf{(2)} \emph{Tensor sketch}~\citep{pham2013fast} is proposed as a generalization of count sketch to the tensor product of vectors.
Given $\vz\in\sR^n$ and an order $k$, consider a $k$ number of i.i.d. hash tables $h^{(1)},\ldots,h^{(k)}: [n]\to[c]$ and i.i.d. binary Rademacher variables $s^{(1)},\ldots,s^{(k)}:[n]\to\{\pm1\}$.
Tensor sketch also projects vector $\vz\in\sR^n$ into $\sR^c$, and is defined as $\tensorsketch_{k}(\vz)_i=\sum_{h(j_1,\cdots,j_k)=i} s^{(1)}(j_1)\cdots s^{(k)}(j_k)\vz_{j_1}\cdots \vz_{j_k}$,  where $h(j_1,\cdots,j_k) = (h^{(1)}(j_1)+\cdots+h^{(k)}(j_k)) \bmod c$.
By definition, a tensor sketch of order $k=1$ degenerates to count sketch; $\tensorsketch_{1}(\cdot)=\countsketch(\cdot)$.
\textbf{(3)} We define \emph{count sketch of a matrix} $U\in\mathbb{R}^{d\times n}$ as the count sketch of each row vector individually, i.e., $\countsketch(U)\in\sR^{d\times c}$ where $[\countsketch(U)]_{i,:}=\countsketch(U_{i,:})$.
The \emph{tensor sketch of a matrix} is defined in the same way.
\citet{pham2013fast} devise a fast computation of tensor sketch of $U\!\in\!\sR^{d\times n}$ (sketch dimension $c$ and order $k$) using count sketches and the {Fast Fourier Transform} (FFT):
\begin{equation}
\label{eq:tensor-sketch}
\tensorsketch_{k}(U) = \ifft\bigg(\bigodot\nolimits_{p=1}^k \fft\big(\countsketch^{(p)}(U)\big)\bigg),
\end{equation}
where $\countsketch^{(p)}(\cdot)$ is the count sketch with hash function $h^{(p)}$ and Rademacher variable $s^{(p)}
$.
$\fft(\cdot)$ and $\ifft(\cdot)$ are the FFT and its inverse applied to each row of a matrix.


\cm{Locality Sensitive Hashing.}
The definition of count sketch and tensor sketch is based on hash table(s) that only requires a data independent uniformity, i.e., with high probability the hash-buckets are of similar size.
In contrast, locality sensitive hashing (LSH) is a hashing scheme that uses locality-sensitive hash function $H:\sR^d\to[c]$ to ensure that nearby vectors are hashed into the same bucket (out of $c$ buckets in total) with high probability while distant ones are not. 
\emph{SimHash} achieves the locality-sensitive property by employing random projections~\citep{charikar2002similarity}.
Given a random matrix $P\in\sR^{c/2\times d}$, SimHash defines a locality-sensitive hash function
\begin{equation}
\label{eq:simhash}
H(\vu)=\argmax\left(\left[P\vu\concat-P\vu\right]\right),
\end{equation}
where $[\cdot\concat\cdot]$ denotes concatenation of two vectors and $\argmax$ returns the index of the largest element.
SimHash is efficient for large batches of vectors~\citep{andoni2015practical}.
In this paper, we apply a learnable version of SimHash that is proposed by \citet{chen2020mongoose}, in which the projection matrix $P$ is updated using gradient descent; see~\cref{sec:hashing} for details.

\section{Sketch-GNN Framework via Polynomial Tensor Sketch}
\label{sec:sketching}


\cm{Problem and Insights.}
We intend to develop a ``sketched counterpart'' of GNNs, where training is based solely on (dimensionality-reduced) compact sketches of the convolution and node feature matrices, the sizes of which can be set independently of the graph size $n$.
In each layer, Sketch-GNN receives some sketches of the convolution matrix $C$ and node representation matrix $X^{(l)}$ and outputs some sketches of the node representations $X^{(l+1)}$.
As a result, the memory and time complexities are inherently independent of $n$.
The bottleneck of this problem is estimating the nonlinear activated product $\nonlinear(CX^{(l)}W^{(l)})$, where $W^{(l)}$ is the learnable weight of the $l$-th layer.

Before considering the nonlinear activation, as a first step, we approximate the linear product $CX^{(l)}W^{(l)}$, using dimensionality reduction techniques such as random projections and low-rank decompositions.
As a direct corollary of the (distributional) Johnson–Lindenstrauss (JL) lemma~\citep{johnson1984extensions}, there exists a projection matrix $R\in\sR^{c\times n}$ such that $CX^{(l)}W^{(l)}\approx \left(CR\tran\right)\left(RX^{(l)}W^{(l)}\right)$~\citep{ding2021vq}.
Tensor sketch is one of the techniques that can achieve the JL bound~\citep{avron2014subspace}; for an error bound, see \cref{thm:linear-sketch} in \cref{apd:proofs}.

Count sketch offers a good estimation of a matrix product, $CX^{(l)}W^{(l)}\approx \countsketch(C)\countsketch((X^{(l)}W^{(l)})\tran)\tran$.
While tensor sketch can be used to approximate the power of matrix product, i.e., $(CX^{(l)}W^{(l)})
^{\odot k}\approx \tensorsketch_k(C)\tensorsketch_k((X^{(l)}W^{(l)})\tran)\tran$, where $(\cdot)^{\odot k}$ is the $k$-th order element-wise power.
If we combine the estimators of element-wise powers of $CX^{(l)}W^{(l)}$, we can approximate the (element-wise) activation $\nonlinear(\cdot)$ on $CX^{(l)}W^{(l)}$.
This technique is known as a \emph{polynomial tensor sketch (PTS)} and is discussed in~\cite{han2020polynomial}.
In this paper, we apply PTS to sketch the message passing of GNNs, including the nonlinear activations.


\subsection{Sketch-GNN: Approximated Update Rules}

\cm{Polynomial Tensor Sketch.} 
Our goal is to approximate the update rule of GNNs (\cref{eq:gnn}) in each layer. We first expand the element-wise non-linearity $\nonlinear$ as a power series, and then approximate the powers using count/tensor sketch, i.e.,
\begin{equation}
\label{eq:poly-tensor-sketch}
\begin{split}
X^{(l+1)} = \nonlinear(CX^{(l)}W^{(l)}) \approx \sum\nolimits_{k=1}^r c_k \ \big(CX^{(l)}W^{(l)}\big)^{\odot k} 
\approx \sum\nolimits_{k=1}^r c_k \ \tensorsketch_{k}(C) \ \tensorsketch_{k}\big((X^{(l)}W^{(l)})\tran\big)\tran,
\end{split}
\end{equation}
where the $k=0$ term always evaluates to zero as $\nonlinear(0)=0$.
In \cref{eq:poly-tensor-sketch}, coefficients $c_k$ are introduced to enable learning or data-driven selection of the weights when combing the terms of different order $k$.
This allows for the approximation of a variety of nonlinear activation functions, such as sigmoid and ReLU.
The error of this approximation relies on the precise estimation of the coefficients $\{{c_k}\}_{k=1}^r$.
To identify the coefficients, \citet{han2020polynomial} design a coreset-based regression algorithm, which requires at least $O(n)$ additional time and memory.
Since the coefficients $\{{c_k}\}_{k=1}^r$ that achieve the best performance for the classification tasks do not necessarily approximate a known activation, we propose learning the coefficients $\{{c_k}\}_{k=1}^r$ to optimize the classification loss directly using gradient descent with simple $L_2$ regularization. 
Experiments indicate that the learned coefficients can approximate the sigmoid activation with relative errors comparable to those of the coreset-based method; see~\cref{fig:poly-tensor-sketch-errors} in~\cref{sec:experiments}.

\cm{Approximated Update Rules.}
The remaining step is to approximate the operations of GNNs using PTS (\cref{eq:poly-tensor-sketch}) on sketches of convolution matrix $C$ and node representation matrix $X^{(l)}$.
Consider $r$ pairwise-independent count sketches $\{\countsketch^{(k)}(\cdot)\}_{k=1}^r$ with sketch dimension $c$, associated with hash tables $h^{(1)},\ldots,h^{(r)}$ and binary Rademacher variables $s^{(1)},\ldots,s^{(r)}$, defined prior to training an $L$-layer \textit{Sketch-GNN}.
Using these hash tables and Rademacher variables, we may also construct tensor sketches $\{\tensorsketch_k(\cdot)\}_{k=2}^{r}$ up to the maximum order $r$.

In \textit{Sketch-GNN}, sketches of node representations (instead of the $O(n)$ standard representation) are propagated between layers. 
To get rid of the dependence on $n$, we count sketch both sides of~\cref{eq:poly-tensor-sketch}
\begin{equation}
\label{eq:update-rule}
\begin{split}
S^{(l+1,k')}_X & := \countsketch^{(k')}\big((X^{(l+1)})\tran\big) \approx \countsketch^{(k')}\big(\sum\nolimits_{k=1}^r c_k^{(l)} \tensorsketch_{k}\big((X^{(l)} W^{(l)})\tran\big) \tensorsketch_{k}(C)\tran\big) \\
& = \sum\nolimits_{k=1}^r c_k^{(l)}\ \tensorsketch_{k}\big((X^{(l)} W^{(l)})\tran\big) \countsketch^{(k')}\big(\tensorsketch_{k}(C)\tran\big) \\
& = \sum\nolimits_{k=1}^r c_k^{(l)} \ifft\Big(\bigodot\nolimits_{p=1}^k \fft\big((W^{(l)})\tran S^{(l,p)}_X\big)\Big) \ S^{(l,k,k')}_C,
\end{split}
\end{equation}
where $S^{(l+1,k')}_X=\countsketch^{(k')}((X^{(l+1)})\tran)\in\sR^{d\times c}$ is the transpose of column-wise count sketch of $X^{(l+1)}$, and the superscripts of $S^{(l+1,k')}_X$ indicate that it is the $k'$-th count sketch of $X^{(l+1)}$ (i.e., sketched by $CS^{(k)}(\cdot)$). 
In the second line of~\cref{eq:update-rule}, we can move the matrix, $c_k^{(l)} \tensorsketch_{k}((X^{(l)} W^{(l)})\tran)$, multiplied on the left to $\tensorsketch_{k}(C)\tran$ out of the count sketch function $\countsketch^{(k')}(\cdot)$, since the operation of row-wise count sketch $\countsketch^{(k')}(\cdot)$ is equivalent to multiplying the associated count sketch matrix $R^{(k')}$ on the right, i.e., for any $U\in\sR^{n\times n}$, $\countsketch^{(k')}(U)=UR^{(k')}$.
In the third line of~\cref{eq:update-rule}, we denote the ``two-sided sketch'' of the convolution matrix as $S^{(l,k,k')}_C:=\countsketch^{(k')}(\tensorsketch_{k}(C)\tran)\in\sR^{c\times c}$ and expand the tensor sketch $\tensorsketch_{k}((X^{(l)} W^{(l)})\tran)$ using the FFT-based formula (\cref{eq:tensor-sketch}). 

\cref{eq:update-rule} is the (recursive) \textbf{update rule} of \textit{Sketch-GNN}, which approximates the operation of the original GNN and learns the sketches of representations.
Looking at the both ends of~\cref{eq:update-rule}, we obtain a formula that approximates the sketches of $X^{(l+1)}$ using the sketches of $X^{(l)}$ and $C$, with learnable weights $W^{(l)}\in\sR^{d\times d}$ and coefficients $\{c^{(l)}_k\in\sR\}_{k=1}^r$.
In practice, to mitigate the error accumulation when propagating through multiple layers, we employ skip-connections across layers in Sketch-GNNs (\cref{eq:update-rule} and their full-size GNN counterparts.
The forward-pass and backward-propagation between the input sketches $\{S^{(0,k)}_X\}_{k=1}^r$ and the sketches of the final layer representations $\{S^{(L,k)}\}_{k=1}^r$ take $O(c)$ time and memory (see~\cref{sec:hashing} for complexity details).


\subsection{Error Bound on Estimated Representation}

Based on~\cref{thm:linear-sketch} and the results in~\citep{han2020polynomial}, we establish an error bound on the estimated final layer representation $\wt{X}^{(L)}$ for GCN; see~\cref{apd:proofs} for the proof and discussions.

\begin{theorem}
\label{thm:error-bound}
For a \textit{Sketch-GNN} with $L$ layers, the estimated final layer representation is $\wt{X}^{(L)}=\median\{R^{(k)}S^{(L,k)}_X\mid k=1,\cdots,r\}$, where the sketches are recursively computed using~\cref{eq:update-rule}.
For  $\Gamma^{(l)} = \max\{5\|X^{(l)}W^{(l)}\|^2_F, (2+3^r)\sum_{i}(\sum_{j}[X^{(l)}W^{(l)}]_{i,j})^r\}$, it holds that $\rmE(\|X^{(L)}-\wt{X}^{(L)}\|^2_{F}) / \|X^{(L)}\|^2_{F} \leq \prod_{l=1}^{L} (1 + 2 / (1+c{\lambda^{(l)}}^2/nr\Gamma^{(l)})) - 1$,
where $\lambda^{(l)}\geq 0$ is the smallest singular value of the matrix $Z\in\sR^{nd\times r}$ and $Z_{:,k}$ is the vectorization of $(CX^{(l)}W^{(l)})^{\odot k}$.
Moreover, if $(c(\lambda^{(l)})^2/nr\Gamma^{(l)}) \gg 1$ holds true for every layer, the relative error is $O(L(n/c))$, which is proportional to the depth of the model, and inversely proportional to the sketch ratio $(c/n)$.
\end{theorem}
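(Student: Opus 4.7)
The plan is to build the bound layer-by-layer, starting from a per-layer guarantee on the Sketch-GNN approximation and then composing across the $L$ layers, finally specializing to the asymptotic regime.

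First I would establish a single-layer error bound assuming the input to layer $l$ is the exact $X^{(l)}$. By the PTS decomposition underlying~\cref{eq:poly-tensor-sketch}, the target $X^{(l+1)}=\nonlinear(CX^{(l)}W^{(l)})$ agrees, up to the polynomial-approximation residual of $\nonlinear$, with $\sum_{k=1}^r c_k^{(l)} (CX^{(l)}W^{(l)})^{\odot k}$; each summand is a matrix product that gets replaced by the sketched product $\tensorsketch_k(C)\tensorsketch_k((X^{(l)}W^{(l)})^{\transs})^{\transs}$. Invoking~\cref{thm:linear-sketch} term by term and assembling the $r$ contributions via the joint-regression / coreset analysis of~\citep{han2020polynomial}, I would obtain $\rmE(\|X^{(l+1)}-\widetilde{X}^{(l+1)}\|_F^2)/\|X^{(l+1)}\|_F^2 \leq 2/(1+c(\lambda^{(l)})^2/(nr\Gamma^{(l)}))$, where $\lambda^{(l)}$ is the smallest singular value of the lifting matrix $Z$ whose $k$-th column is the vectorization of $(CX^{(l)}W^{(l)})^{\odot k}$, and $\Gamma^{(l)}$ controls the $2$-nd and $r$-th moments of $X^{(l)}W^{(l)}$ that appear in the PTS second-moment calculation. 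The median-of-$r$ reconstruction $\widetilde{X}^{(L)}=\median\{R^{(k)}S^{(L,k)}_X\}$ is then justified by the pairwise independence of the $r$ count sketches: the entrywise median preserves unbiasedness and does not inflate the multiplicative factor above.

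Second, I would compose the per-layer bounds across the $L$ layers. Conditioning on the realization of the sketches at layer $l-1$, the Sketch-GNN update (\cref{eq:update-rule}) is, up to the polynomial residual, a deterministic linear map in the appropriate tensor lifting; combined with the fresh randomness of the sketches used at layer $l$, this lets me invoke the tower property and get a multiplicative accumulation $(1 + 2/(1 + c(\lambda^{(l)})^2/(nr\Gamma^{(l)})))$ at each layer, yielding the telescoping product $\prod_{l=1}^L (1+2/(1+c(\lambda^{(l)})^2/(nr\Gamma^{(l)}))) - 1$ in the statement. For the asymptotic claim I would use $1+2/(1+x)=1+O(1/x)$ when $x\gg 1$, so each factor becomes $1+O(nr\Gamma^{(l)}/(c(\lambda^{(l)})^2))=1+O(n/c)$ under the stated hypothesis, and $\prod_{l=1}^L (1+O(n/c))-1 = L\cdot O(n/c)+O((n/c)^2)=O(L(n/c))$.

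The main obstacle will be the composition step: the per-layer PTS guarantee in~\citep{han2020polynomial} is proved assuming the input matrix is exact, whereas in Sketch-GNN the input to layer $l$ is itself a noisy sketched encoding of the already-noisy $\widetilde{X}^{(l)}$. Making the bound cleanly multiplicative requires (i) using the linearity of count/tensor sketch to show that the prior-layer error passes through the sketched product as a linear perturbation, so it cannot be amplified super-multiplicatively; (ii) arguing that the skip-connections mentioned after~\cref{eq:update-rule} keep $\Gamma^{(l)}$ bounded rather than growing with $l$; and (iii) absorbing the truncation error of the degree-$r$ polynomial approximation of $\nonlinear$ into the same form, which I would justify by the learned coefficients $\{c_k^{(l)}\}$ minimizing this residual on the data distribution (as suggested by the empirical coefficient-fitting discussion). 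With these three ingredients in place, the product form and the $O(L(n/c))$ asymptotic follow directly.
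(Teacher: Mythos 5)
Your proposal takes essentially the same route as the paper's proof: the per-layer bound $2/(1+c(\lambda^{(l)})^2/(nr\Gamma^{(l)}))$ is exactly Theorem 5 of \citet{han2020polynomial}, which the paper cites directly, and the multi-layer statement is obtained by composing these bounds into the telescoping product and then expanding each factor as $1+O(n/c)$ when $c(\lambda^{(l)})^2/(nr\Gamma^{(l)})\gg 1$. The only mechanical difference is in the composition step (the paper separates the fresh sketching error from the propagated error via $(a+b)^2\le 2a^2+2b^2$ where you condition on the previous layer's sketches), and the gap you flag---that the per-layer guarantee assumes exact inputs while layer $l$ actually receives a noisy sketch---is present, and left equally informal, in the paper's own argument.
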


\cm{Remarks.} Despite the fact that in~\cref{thm:error-bound} the error bound grows for smaller sketch ratios~$c/n$, we observe in experiments that the sketch-ratio required for competitive performance decreases as $n$ increases; see~\cref{sec:experiments}.
As for the number of independent sketches $r$, we know   
from~\cref{thm:linear-sketch} that the dependence of $r$ on $n$ is $r=\Omega(3^{\log_c n})$ which is negligible when $n$ is not too small; thus, in practice $r=3$ is used.

The theoretical framework may not completely correspond to reality.
Experimentally, the coefficients $\{\{c^{(l)}_k\}_{k=1}^r\}_{l=1}^L$ with the highest performance do not necessarily approximate a known activation.
We defer the challenging problem of bounding the error of sketches and coefficients learned by gradients to future studies.
Although the error bound is in expectation, we do not train over different sketches per iteration due to the instability caused by randomness.
Instead, we introduce learnable locality sensitive hashing (LSH) in the next section to counteract the approximation limitations caused by the fixed number of sketches.


\subsection{A Practical Implementation: Learning Sketches using LSH}
\label{sec:hashing}

\cm{Motivations of Learnable Sketches.}
In~\cref{sec:sketching}, we apply polynomial tensor sketch (PTS) to approximate the operations of GNNs on sketches of the convolution and feature matrices.
Nonetheless, the pre-computed sketches are fixed during training, resulting in \textbf{two major drawbacks}: 
\textbf{(1)} The performance is limited by the quality of the initial sketches.
For example, if the randomly-generated hash tables $\{h^{(k)}\}_{k=1}^r$ have unevenly distributed buckets, there will be more hash collisions and consequently worse sketch representations.
The performance will suffer because only sketches are used in training.
\textbf{(2)} More importantly, when multiple Sketch-GNN layers are stacked, the input representation $X^{(l)}$ changes during training (starting from the second layer).
Fixed hash tables are not tailored to the ``changing'' hidden representations.

We seek a method for efficiently constructing high-quality hash tables tailored for each hidden embedding.
Locality sensitive hashing (LSH) is a suitable tool since it is data-dependent and preserves data similarity by hashing similar vectors into the same bucket.
This can significantly improve the quality of sketches by reducing the errors due to hash collisions.


\cm{Combining LSH with Sketching.}
At the time of sketching, the hash table $h^{(k)}:[n]\to[c]$ is replaced with an LSH function $H^{(k)}:\sR^d\to[c]$, for any $k\in[r]$.
Specifically, in the $l$-th layer of a Sketch-GNN, we hash the $i$-th node to the $H^{(k)}(X^{(l)}_{i,:})$-th bucket for every $i\in[n]$, where $X^{(l)}_{i,:}$ is the embedding vector of node $i$.
As a result, we define a data-dependent hash table
\begin{equation}
\label{eq:lsh-hash}
h^{(l,k)}(i)=H^{(k)}(X^{(l)}_{i,:})
\end{equation}
that can be used for computing the sketches of $S^{(l,k)}_X$ and $S^{(l,k,k')}_C$.
This LSH-based sketching can be directly applied to sketch the fixed convolution matrix and the input feature matrix.
If SimHash is used, i.e., $H^{(k)}(\vu)=\argmax\left(\left[P^{(k)}\vu\concat-P^{(k)}\vu\right]\right)$ (\cref{eq:simhash}), an additional $O(ncr(\log{c}+d))$ computational overhead is introduced to hash the $n$ nodes for the $r$ hash tables during preprocessing; see~\cref{apd:theoretical-complexities} more information.
SimHash(es) are implemented as simple matrix multiplications that are practically very fast.

In order to employ LSH-based hash functions customized to each layer to sketch the hidden representations of a Sketch-GNN (i.e., $l=2,\ldots,L-1$), we face \textbf{two major challenges}:
\textbf{(1)} Unless we explicitly unsketch in each layer, the estimated hidden representations $\wt{X}^{(l)} (l=2,\ldots,L-1)$ cannot be accessed and used to compute the hash tables.
However, unsketching any hidden representation, i.e., $\wt{X}^{(l)}=\median\{R^{(k)}S^{(l,k)}_X\mid k=1,\cdots,r\}$, requires $O(n)$ memory and time.
We need to come up with an efficient algorithm that updates the hash tables without having to unsketch the complete representation.
\textbf{(2)} It's unclear how to change the underlying hash table of a sketch across layers without unsketching to the $n$-dimensional space, even if we know the most up-to-date hash tables suited to each layer.

The \textbf{challenge (2)}, i.e., changing the underlying hash table of across layers, can be solved by maintaining a sparse $c\times c$ matrix $T^{(l,k)}:
=R^{(l,k)}(R^{(l+1,k)})\tran$ for each $k\in[r]$, which only requires $O(cr)$ memory and time overhead; see~\cref{apd:learnable-sketch} for more information and detailed discussions.
We focus on \textbf{challenge (1)} for the remainder of this section.


\cm{Online Learning of Sketches.}
To learn a hash table tailored for a hidden layer using LSH without unsketching, we develop an efficient algorithm to update the LSH function using only a size-$|B|$ subset of the length-$n$ unsketched representations, where $B$ denotes a subset of nodes we select.
This algorithm, which we term \textit{online learning of sketches}, is made up of two key parts: \emph{(Part 1)} select a subset of nodes $B\subseteq[n]$ to effectively update the hash table, and \emph{(Part 2)} update the LSH function $H(\cdot)$ with a triplet loss computed using this subset.

\emph{(1) Selection of subset $B$:} Because model parameters are updated slowly during neural network training, the data-dependent LSH hash tables also changes slowly (this behavior was detailed in \citep{chen2020mongoose}).
The number of updates to the hash table drops very fast along with training, empirically verified in~\cref{fig:lsh-updates} (left) in~\cref{sec:experiments}.
Based on this insight, we only need to update a small fraction of the hash table during training.
To identify this subset $B\in[n]$ of nodes, gradient signals can be used.
Intuitively, a node representation vector hashed into the wrong bucket will be aggregated with distant vectors and lead to larger errors and subsequently larger gradient signals.
Specifically, we propose finding the candidate set $B$ of nodes by taking the union of the several buckets with the largest gradients, i.e., $B=\{i\mid h^{(l,k)}(i)=\argmax_j[S_X^{(l,k)}]_{j,:}\text{ for some } k\}$.
The memory and overhead required to update the entries in $B$ in the hash table is $O(|B|)$.

\emph{(2) Update of LSH function:} In order to update the projection matrix $P$ that defines a SimHash $H^{(k)}:\sR^d\to[c]$ (\cref{eq:simhash}),  instead of the $O(n)$ full triplet loss introduced by~\cite{chen2020mongoose}, we consider a sampled version of the triplet loss on the candidate set $B$ with $O(|B|)$ complexity, namely 
\begin{equation}
\small
\label{eq:lsh-loss}
\gL(H, \gP_{+}, \gP_{-}) = \max\bigg\{0, \sum\nolimits_{(\vu,\vv)\in\gP_{-}}\cos(H(\vu),H(\vv)) - \sum\nolimits_{(\vu,\vv)\in\gP_{+}}\cos(H(\vu),H(\vv))+ \alpha\bigg\},
\end{equation}
where $\gP_{+}=\{(\wt{X}_{i,:}, \wt{X}_{j,:})\mid i, j\in B, \ip{\wt{X}_{i,:}}{\wt{X}_{j,:}} > t_{+}\}$ and $\gP_{-}=\{(\wt{X}_{:,i}, \wt{X}_{:,j})\mid i, j\in B, \ip{\wt{X}_{:,i}}{\wt{X}_{:,j}} < t_{-}\}$ are the similar and dissimilar node-pairs in the subset $B$; $t_{+}>t_{-}$ and $\alpha>0$ are hyper-parameters.
This triplet loss $\gL(H, \gP_{+}, \gP_{-})$ is used to update $P$ using gradient descent, as described in~\citep{chen2020mongoose}, with a $O(c|B|d+|B|^2)$ overhead.
Experimental validation of this LSH update mechanism can be found in~\cref{fig:lsh-updates} in~\cref{sec:experiments}.


\cm{Avoiding $O(n)$ in Loss Evaluation.}
We can estimate the final layer representation using the $r$ sketches $\{S^{(L,k)}\}_{k=1}^r$, i.e., $\wt{X}^{(L)}=\median\{R^{(k)}S^{(L,k)}_X\mid k=1,\cdots,r\}$ and compute the losses of all nodes for node classification (or some node pairs for link prediction).
However, the complexity of loss evaluation is $O(n)$, proportional to the number of ground-truth labels.
In order to avoid $O(n)$ complexity completely, rather than un-sketching the node representation for all labeled nodes, we employ the locality sensitive hashing (LSH) technique again for loss calculation so that only a subset of node losses are evaluated based on a set of hash tables.
Specifically, we construct an LSH hash table for each class in a node classification problem, which indexes all of the labeled nodes of this class and can be utilized to choose the nodes with poor predictions by leveraging the locality property.
This technique, introduced in~\citep{chen2020slide}, is known as sparse forward-pass and back-propagation, and we defer the descriptions to~\cref{apd:learnable-sketch}.

\cm{One-time Preprocessing.}
If the convolution matrix $C$ is fixed (GCN, GraphSAGE), the ``two-sided sketch'' $S^{(l,k,k')}_C=\countsketch^{(k')}(\tensorsketch_{k}(C)\tran)\in\sR^{c\times c}$ is the same in each layer and may be denoted as $S^{(k,k')}_C$.
In addition, all of the $r^2$ sketches of $C$, i.e., $\{\{S^{(k,k')}_C\in\sR^{c\times c}\}_{k=1}^r\}_{k'=1}^r$ can be computed during the preprocessing phase.
If the convolution matrix $C$ is sparse (which is true for most GNNs following~\cref{eq:gnn} on a sparse graph), we can use the sparse matrix representations for the sketches $\{\{S^{(k,k')}_C\in\sR^{c\times c}\}_{k=1}^r\}_{k'=1}^r$, and the total memory taken by the $r^2$ sketches is $O(r^2c(m/n))$ where $(2m/n)$ is the average node degree (see \cref{apd:theoretical-complexities} for details).
We also need to compute the $r$ count sketches of the input node feature matrix $X=X^{(0)}$, i.e., $\{S^{(0,k)}_X\}_{k=1}^r$ during preprocessing, which requires $O(rcd)$ memory in total.
In this regard, we have substituted the input data with compact graph-size independent sketches (i.e., $O(c)$ memory).
Although the preprocessing time required to compute these sketches is $O(n)$, it is a one-time cost prior to training, and it is widely known that sketching is practically very fast.


\cm{Complexities of Sketch-GCN.} The theoretical complexities of Sketch-GNN is summarized as follows, where for simplicity we assume bounded maximum node degree, i.e., $m=O(n)$. \textbf{(1) Training Complexity}: 
\textit{(1a) Forward and backward propagation}: $O(Lcrd(\log(c)+d+m/n))=O(c)$ time and $O(Lr(cd+rm/n))=O(c)$ memory. \textit{(1b) Hash and sketch update}: $O(Lr(c+|B|d))=O(c)$ time and memory. 
\textbf{(2) Preprocessing}: $O(r(rm+n+c))=O(n)$ time and $O(rc(d+rm/n))=O(c)$ memory. 
\textbf{(3) Inference}:  $O(Ld(m+nd))=O(n)$ time and $O(m+Ld(n+d))=O(n)$ memory (the same as a standard GCN).
We defer a detailed summary of the theoretical complexities of Sketch-GNN to~\cref{apd:theoretical-complexities}.


We generalize Sketch-GNN to more GNN models in~\cref{apd:generalize-more-gnns} and the pseudo-code which outlines the complete workflow of Sketch-GNN can be find in~\cref{apd:complete-code}.

\section{Related Work}
\label{sec:related}

\cm{Towards sublinear GNNs.}
Nearly all existing scalable methods focus on mini-batching the large graph and resolving the memory bottleneck of GNNs, without reducing the epoch training time.
Few recent work focus on graph compression~\citep{huang2021scaling,jin2022graph} can also achieve sublinear training time by \textit{coarsening} (e.g., using \citep{loukas2019graph}) the graph during preprocessing or \textit{condensing} the graph with dataset condensation techniques like gradient-matching~\citep{zhao2020dataset}, so that we can train GNNs on the coarsened/condensed graph with fewer nodes and edges.
Nevertheless, these strategies suffer from \textbf{two issues}:
\textbf{(1)} Although graph coarsening/condensation is a one-time cost, the memory and time overheads are often worse than $O(n)$ and can be prohibitively large on graphs with over 100K nodes.
Even the fastest graph coarsening algorithm used by~\citep{huang2021scaling} takes more than 68 minutes to process the 233K-node \textit{Reddit} graph~\citep{zeng2019graphsaint}.
The long preprocessing time renders any training speedups meaningless.
\textbf{(2)} The test performance of a model trained on the coarsened graph highly depends on the GNN type.
For graph condensation, if we do not carefully choose the GNN architecture used during condensation, the test performance of downstream GNNs can suffer from a 9.5\% accuracy drop on the \textit{Cora} graph~\citep{jin2021graph}.
For graph coarsening, although the performance of~\citep{huang2021scaling} on GCN is good, significant performance degradations are observed on GraphSAGE and GAT; see \cref{sec:experiments}.


\cm{Other scalable methods for GNNs} can be categorized into four classes, all of them still require linear training complexities. \textbf{(A)} On a large sparse graph with $n$ nodes and $m$ edges, the ``full-graph'' training of a $L$-layer GCN with $d$-dimensional (hidden) features per layer requires $O(m+ndL+d^2L)$ memory and $O(mdL+nd^2L)$ epoch time.
\textbf{(B)} Sampling-based methods sample mini-batches from the complete graph following three schemes: \begin{inparaenum}[(1)] \item node-wisely sample a subset of neighbors in each layer to reduce the neighborhood size; \item layer-wisely sample a set of nodes independently in each layer; \item subgraph-wisely sample a subgraph directly and simply forward-pass and back-propagate on that subgraph.~\end{inparaenum}
\textbf{(B.1)} GraphSAGE~\citep{hamilton2017inductive} samples $r$ neighbors for each node while ignoring messages from other neighbors.
$O(br^L)$ nodes are sampled in a mini-batch (where $b$ is the mini-batch size), and the epoch time is $O(ndr^L)$; therefore, GraphSAGE is impractical for deep GNNs on a large graph.
FastGCN~\citep{chen2018fastgcn} and LADIES~\citep{zou2019layer} are layer-sampling methods that apply importance sampling to reduce variance.
\textbf{(B.2)} The subgraph-wise scheme has the best performance and is most prevalent.
Cluster-GCN~\citep{chiang2019cluster} partitions the graph into many densely connected subgraphs and samples a subset of subgraphs (with edges between subgraphs added back) for training per iteration. GraphSAINT~\citep{zeng2019graphsaint} samples a set of nodes and uses the induced subgraph for mini-batch training.
Both Cluster-GCN and GraphSAINT require $O(mdL+nd^2L)$ epoch time, which is the same as ``full-graph'' training,  although Cluster-GCN also needs $O(m)$ pre-processing time.
\textbf{(C)} Apart from sampling strategies, historical-embedding-based methods propose mitigating sampling errors and improving performance using some stored embeddings.
GNNAutoScale~\citep{fey2021gnnautoscale} keeps a snapshot of all embeddings in CPU memory, leading to a large $O(ndL)$ memory overhead.
VQ-GNN~\citep{ding2021vq} maintains a vector quantized data structure for the historical embeddings, whose size is independent of $n$.
\textbf{(D)} Linearized GNNs~\citep{wu2019simplifying,bojchevski2020scaling,rossi2020sign} replace the message passing operation in each layer with a one-time message passing during preprocessing.
They are practically efficient, but the theoretical complexities remain $O(n)$.
Linearized models usually over-simplify the corresponding GNN and limit its expressive power.


We defer discussion of more scalable GNN papers and the broad literature of sketching and LHS for neural networks to~\cref{apd:more-related-work}.

\section{Experiments}
\label{sec:experiments}

\begin{figure}[!htbp]
\ffigbox[\textwidth][]{
\begin{subfloatrow}
\ffigbox[0.44\textwidth][]{%
\resizebox{0.44\textwidth}{!}{
\begin{tikzpicture}

\definecolor{color0}{rgb}{0.12156862745098,0.466666666666667,1}
\definecolor{color1}{rgb}{1,0.498039215686275,0.0549019607843137}
\definecolor{color2}{rgb}{0.172549019607843,0.627450980392157,0.172549019607843}

\begin{axis}[
width = 0.8\textwidth,
height = 0.43\textwidth,
legend cell align={left},
legend style={fill opacity=0.8, draw opacity=1, text opacity=1, draw=white!80!black},
tick align=outside,
tick pos=left,
x grid style={white!69.0196078431373!black},
xlabel={Sketch Ratio \(\displaystyle (c/n)\)},
xmin=0.0050, xmax=0.1050,
xtick style={color=black},
xtick={0.02,0.04,0.06,0.08,0.10},
xticklabels={
  \(\displaystyle 0.02\),
  \(\displaystyle 0.04\),
  \(\displaystyle 0.06\),
  \(\displaystyle 0.08\),
  \(\displaystyle 0.10\)
},
y grid style={white!69.0196078431373!black},
ylabel={Relative Error},
ymin=-4, ymax=-2,
ytick style={color=black},
ytick={-4,-3.5,-3,-2.5,-2},
yticklabels={
  \(\displaystyle 10^{-3.0}\),
  \(\displaystyle 10^{-2.5}\),
  \(\displaystyle 10^{-2.0}\),
  \(\displaystyle 10^{-1.5}\),
  \(\displaystyle 10^{-1.0}\)
}
]
\addplot [line width=2pt, color0, mark=asterisk, mark size=3, mark options={solid}]
table {%
0.01 -3.19921518181219
0.0181818181818182 -3.29798001634747
0.0263636363636364 -3.71941088808598
0.0345454545454545 -3.82141460926356
0.0427272727272727 -3.84246371469489
0.0509090909090909 -3.84526988002236
0.0590909090909091 -3.86120278076176
0.0672727272727273 -3.90557525540698
0.0754545454545455 -3.92442208302867
0.0836363636363636 -3.93461471928061
0.0918181818181818 -3.94755895170086
0.1 -3.9619262646002
};
\addlegendentry{Coreset}
\addplot [line width=2pt, color1, mark=triangle*, mark size=3, mark options={solid}]
table {%
0.01 -2.31648885191064
0.0181818181818182 -2.50574243325889
0.0263636363636364 -2.60103333316791
0.0345454545454545 -3.02786889181443
0.0427272727272727 -3.17712354020563
0.0509090909090909 -3.3830415238034
0.0590909090909091 -3.48546295929048
0.0672727272727273 -3.56823236413073
0.0754545454545455 -3.63195570341404
0.0836363636363636 -3.70668236408888
0.0918181818181818 -3.75512959800214
0.1 -3.79582498069718
};
\addlegendentry{Taylor}
\addplot [line width=2pt, color2, mark=*, mark size=2, mark options={solid}]
table {%
0.01 -3.03616712835266
0.0181818181818182 -3.17432093263175
0.0263636363636364 -3.37712622049865
0.0345454545454545 -3.4704347307966
0.0427272727272727 -3.56856466717186
0.0509090909090909 -3.80295524895827
0.0590909090909091 -3.85683359154039
0.0672727272727273 -3.86564391756248
0.0754545454545455 -3.9325076871937
0.0836363636363636 -3.90092306650462
0.0918181818181818 -3.90036842188159
0.1 -3.91096792909753
};
\addlegendentry{Learned via GD (Ours)}
\end{axis}

\end{tikzpicture}}
\vspace{-2em}
}
{%
\caption{\vspace{-0.8em}Error of PTS\label{fig:poly-tensor-sketch-errors}}%
}
\ffigbox[0.55\textwidth][]{%
\resizebox{0.27\textwidth}{!}{
\begin{tikzpicture}

\definecolor{color0}{rgb}{0.12156862745098,0.266666666666667,1}

\begin{axis}[
width = 0.6\textwidth,
height = 0.5\textwidth,
tick align=outside,
tick pos=left,
x grid style={white!69.0196078431373!black},
xlabel={\Large{Epoch}},
xmin=0.55, xmax=10.45,
xtick style={color=black},
y grid style={white!69.0196078431373!black},
ylabel={\Large{$\Delta\mathsf{Hamming}$}},
ymin=-0.00918358972626949, ymax=0.241218585357391,
ytick style={color=black},
ytick={0.05,0.10,0.15,0.20},
yticklabels={
  0.05,
  0.10,
  0.15,
  0.20
},
]
\addplot [line width=2pt, color0, mark=*, mark size=2.5, mark options={solid}]
table {%
1 0.229836668308134
2 0.0393830022726354
3 0.0224446014673666
4 0.0178526481772712
5 0.0104887347042208
6 0.00991587289559692
7 0.00910014532819403
8 0.00571693691841859
9 0.00551933231419419
10 0.0021983273229878
};
\end{axis}

\end{tikzpicture}}
\vspace{-1em}
\resizebox{0.27\textwidth}{!}{
\begin{tikzpicture}

\definecolor{color0}{rgb}{0.12156862745098,0.266666666666667,1}

\begin{axis}[
width = 0.6\textwidth,
height = 0.5\textwidth,
tick align=outside,
tick pos=left,
x grid style={white!69.0196078431373!black},
xlabel={\Large{Epoch}},
xmin=-0.45, xmax=31.45,
xtick style={color=black},
y grid style={white!69.0196078431373!black},
ylabel={\Large{$\mathsf{Hamming}($}\large{$h_{\mathrm{learned}}, h_{\mathrm{ground-truth}}$}\Large{$)$}},
ymin=-0.00918358972626949, ymax=0.241218585357391,
ytick style={color=black},
ytick={0.05,0.10,0.15,0.20},
yticklabels={
  0.05,
  0.10,
  0.15,
  0.20
},
]
\addplot [line width=2pt, color0, mark=*, mark size=2, mark options={solid}]
table {%
1 0.177563964630745
2 0.146072832978429
3 0.141900945245772
4 0.120369027972805
5 0.115419911028757
6 0.0857601636226481
7 0.0856917259009086
8 0.0875699123219007
9 0.0680421549819932
10 0.0577224590501793
11 0.0566421950158343
12 0.0486769331547043
13 0.0519889095795086
14 0.0544741298196902
15 0.0481351427537694
16 0.0369777211420175
17 0.0377880680233942
18 0.0279283003426158
19 0.0273491840250302
20 0.0316218266529542
21 0.0170355731409302
22 0.0288917090854923
23 0.0215961396164924
24 0.0197327690282585
25 0.0206307532844979
26 0.0191537126572998
27 0.0225168471803688
28 0.0155448802509355
29 0.0150015995920195
30 0.0188307865419134
};
\end{axis}

\end{tikzpicture}}
\vspace{-1em}
}{%
\caption{\vspace{-0.8em}Learnable LSH\label{fig:lsh-updates}}%
}
\end{subfloatrow}
}
{
\definecolor{myblue}{rgb}{0.12156862745098,0.466666666666667,1} 
\definecolor{myorange}{rgb}{1,0.498039215686275,0.0549019607843137} 
\definecolor{mygreen}{rgb}{0.172549019607843,0.627450980392157,0.172549019607843}
\vspace{-0.8em}
\caption{\textbf{\Cref{fig:poly-tensor-sketch-errors}} Relative errors when applying polynomial tensor sketch (PTS) to the nonlinear unit $\nonlinear(CXW)$ following~\cref{eq:poly-tensor-sketch}. The dataset used is Cora~\citep{sen2008collective}. $\nonlinear$ is the sigmoid activation. We set $r=5$ and test on a GCN with fixed $W=I_d\in\sR^{d\times d}$. The coefficients $\{c_k\}_{k=1}^r$ can be computed by a coreset regression~\citep{han2020polynomial} (\textcolor{myblue}{blue}), by a Taylor expansion of $\nonlinear(\cdot)$ (\textcolor{myorange}{orange}), or learned from gradient descent proposed by us (\textcolor{mygreen}{green}).
\textbf{\Cref{fig:lsh-updates}} The left plot shows the Hamming distance changes of the hash table in the 2nd layer during the training of a 2-layer \textit{Sketch-GCN}, where the hash table is constructed from the unsketched representation $\wt{X}^{(1)}$ using SimHash. The right plot shows the Hamming distances between the hash table learned using our algorithm and the hash table constructed directly from $\wt{X}^{(1)}$.}}
\end{figure}
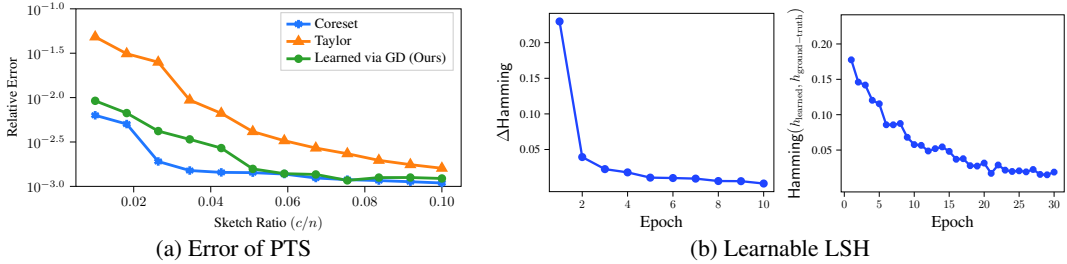

\begin{figure}[!htbp]
\begin{floatrow}[2]%
\ttabbox[0.55\textwidth]{%
    \adjustbox{max width=0.54\textwidth}{%
    {\renewcommand{\arraystretch}{1.0}%
    \begin{tabular}{ccccc} \toprule
    Benchmark                       & \multicolumn{2}{c}{Cora}          & \multicolumn{2}{c}{Citeseer}           \\ \arrayrulecolor{black!100} \midrule
    GNN Model                       & \multicolumn{4}{c}{GCN}                                                    \\ \arrayrulecolor{black!50} \cmidrule(r){1-1} \cmidrule(l){2-5}
    ``Full-Graph'' (oracle)         & \multicolumn{2}{c}{$.8119\pm.0023$} & \multicolumn{2}{c}{$.7191\pm.0018$}  \\ \arrayrulecolor{black!100} \midrule
    Sketch-Ratio ($c/n$).           & 0.013           & 0.026           & 0.009           & 0.018                \\ \arrayrulecolor{black!50} \cmidrule(r){1-1} \cmidrule(lr){2-3} \cmidrule(l){4-5}
    Coarsening                      & $.3121\pm.0024$ & $.6518\pm.0051$ & $.5218\pm.0049$ & $.5908\pm.0045$      \\ 
    GCond                           & $.7971\pm.0113$ & $.8002\pm.0075$ & $.7052\pm.0129$ & $.7059\pm.0087$      \\
    \textbf{Sketch-GNN (ours)}      & $.8012\pm.0104$ & $.8035\pm.0071$ & $.7091\pm.0093$ & $.7114\pm.0059$      \\ \bottomrule
    \end{tabular}}}
}{%
    \caption{\label{tab:performance-sublinear-1}Performance of Sketch-GCN in comparison to Graph Condensation~\citep{jin2021graph} and Graph Coarsening~\citep{huang2021scaling} on \textit{Cora} and \textit{Citeseer} with 2-layer GCNs.}
}%
\ttabbox[0.44\textwidth]{%
    \adjustbox{max width=0.43\textwidth}{%
    {\renewcommand{\arraystretch}{1.20}%
    \begin{tabular}{cccc} \toprule
                            & \multirow{2}{*}{\begin{tabular}[c]{@{}c@{}}Preprocessing\\ Architecture\end{tabular}} & \multicolumn{2}{c}{Downstream Architecture}  \\ \arrayrulecolor{black!50} \cmidrule(l){3-4}
                            &                                                                                       & GCN                   & GraphSAGE            \\ \midrule
    ``Full-Graph'' (oracle) & N/A                                                                                   & $.8119\pm.0023$           & $.7981\pm.0053$          \\ \arrayrulecolor{black!50} \cmidrule(r){1-1} \cmidrule(l){2-4}
    \multirow{2}{*}{GCond}  & GCN                                                                                   & $.7065\pm.0367$           & $.6024\pm.0203$          \\
                            & GraphSAGE                                                                             & $.7694\pm.0051$           & $.7618\pm.0087$          \\ \arrayrulecolor{black!50} \cmidrule(r){1-1} \cmidrule(l){2-4}
    Sketch-GNN (ours)       & N/A                                                                                   & $.8035\pm.0071$           & $.7914\pm.0121$          \\ \bottomrule
    \end{tabular}}}
}{%
    \caption{\label{tab:performance-sublinear-2}Performance across GNN architectures in comparison to Graph Condensation~\citep{jin2021graph} on \textit{Cora} with sketch ratio $c/n=0.026$.}
}%

\end{floatrow}
\end{figure}

\begin{table}[!htbp]
\centering
\adjustbox{max width=0.99\textwidth}{%
{\renewcommand{\arraystretch}{1.15}%
\begin{tabular}{lccccccccc} \toprule
Benchmark                           & \multicolumn{9}{c}{\textit{ogbn-arxiv}}                                                                                                                         \\ \arrayrulecolor{black!100} \midrule
GNN Model                           & \multicolumn{3}{c}{GCN}                             & \multicolumn{3}{c}{GraphSAGE}                       & \multicolumn{3}{c}{GAT}                             \\ \arrayrulecolor{black!50} \cmidrule(r){1-1} \cmidrule(lr){2-4} \cmidrule(lr){5-7} \cmidrule(l){8-10}
``Full-Graph'' (oracle)             & \multicolumn{3}{c}{$.7174\pm.0029$}                 & \multicolumn{3}{c}{$.7149\pm.0027$}                 & \multicolumn{3}{c}{$.7233\pm.0045$}                 \\ \arrayrulecolor{black!100} \midrule
Sketch Ratio ($c/n$)                & 0.1         & 0.2         & 0.4                     & 0.1         & 0.2         & 0.4                     & 0.1         & 0.2         & 0.4                     \\ \arrayrulecolor{black!50} \cmidrule(r){1-1} \cmidrule(lr){2-4} \cmidrule(lr){5-7} \cmidrule(l){8-10}
Coarsening                          & $.6508\pm.0091$ & $.6665\pm.0010$ & $.6892\pm.0035$ & $.5264\pm.0251$ & $.5996\pm.0134$ & $.6609\pm.0061$ & $.5177\pm.0028$ & $.5946\pm.0027$ & $.6307\pm.0041$ \\
\textbf{Sketch-GNN (ours)}          & $.6913\pm.0154$ & $.7004\pm.0096$ & $.7028\pm.0087$ & $.6929\pm.0194$ & $.6963\pm.0056$ & $.7048\pm.0080$ & $.6967\pm.0067$ & $.6910\pm.0135$ & $.7053\pm.0034$ \\ \arrayrulecolor{black!100} \bottomrule
\end{tabular}}}
\caption{\label{tab:performance-sublinear-3}Performance of Sketch-GNN in comparison to Graph Coarsening~\citep{huang2021scaling} on \textit{ogbn-arxiv}.}
\end{table}
\begin{table}[!htbp]
\centering
\adjustbox{max width=0.99\textwidth}{%
\begin{threeparttable}
{\renewcommand{\arraystretch}{1.15}%
\begin{tabular}{lccccccccc} \arrayrulecolor{black!100} \toprule
Benchmark                  & \multicolumn{3}{c}{\textit{ogbn-arxiv}}             & \multicolumn{3}{c}{\textit{Reddit}}              & \multicolumn{3}{c}{\textit{ogbn-product}}               \\ \arrayrulecolor{black!100} \cmidrule(r){1-1} \cmidrule(lr){2-4} \cmidrule(lr){5-7} \cmidrule(l){8-10}
SGC                        & \multicolumn{3}{c}{$.6944\pm.0005$}                 & \multicolumn{3}{c}{$.9464\pm.0011$}              & \multicolumn{3}{c}{$.6683\pm.0029$}                     \\ \arrayrulecolor{black!50} \cmidrule(r){1-1} \cmidrule(lr){2-4} \cmidrule(lr){5-7} \cmidrule(l){8-10}
GNN Model                  & GCN             & GraphSAGE       & GAT             & GCN             & GraphSAGE        & GAT         & GCN         & GraphSAGE   & GAT                         \\ \arrayrulecolor{black!50} \cmidrule(r){1-1} \cmidrule(lr){2-4} \cmidrule(lr){5-7} \cmidrule(l){8-10}
``Full-Graph'' (oracle)    & $.7174\pm.0029$ & $.7149\pm.0027$ & $.7233\pm.0045$ & OOM             & OOM           & OOM         & OOM         & OOM         & OOM                            \\ \arrayrulecolor{black!50} \cmidrule(r){1-1} \cmidrule(lr){2-4} \cmidrule(lr){5-7} \cmidrule(l){8-10}
GraphSAINT                 & $.7079\pm.0057$ & $.6987\pm.0039$ & $.7117\pm.0032$ & $.9225\pm.0057$ & $.9581\pm.0074$  & $.9431\pm.0067$ & $.7602\pm.0021$ & $.7908\pm.0024$ & $.7971\pm.0042$ \\
VQ-GNN                     & $.7055\pm.0033$ & $.7028\pm.0047$ & $.7043\pm.0034$ & $.9399\pm.0021$ & $.9449\pm.0024$  & $.9438\pm.0059$ & $.7524\pm.0032$ & $.7809\pm.0019$ & $.7823\pm.0049$ \\ \arrayrulecolor{black!100} \midrule
Sketch Ratio ($c/n$)       & \multicolumn{3}{c}{0.4}                             & \multicolumn{3}{c}{0.3}                          & \multicolumn{3}{c}{0.2}                                 \\ \arrayrulecolor{black!50} \cmidrule(r){1-1} \cmidrule(lr){2-4} \cmidrule(lr){5-7} \cmidrule(l){8-10}
\textbf{Sketch-GNN (ours)} & $.7028\pm.0087$ & $.7048\pm.0080$ & $.7053\pm.0034$ & $.9280\pm.0034$ & $0.9485\pm.0061$ & $.9326\pm.0063$ & $.7553\pm.0105$ & $.7762\pm.0093$ & $.7748\pm.0071$ \\ \arrayrulecolor{black!100} \bottomrule
\end{tabular}}
\appto\TPTnoteSettings{\footnotesize}
\begin{tablenotes}[para]
\end{tablenotes}
\end{threeparttable}}
\caption{\label{tab:performance-all}Performance of Sketch-GNN versus SGC~\citep{wu2019simplifying}, GraphSAINT~\citep{zeng2019graphsaint}, and VQ-GNN~\citep{ding2021vq}.}
\end{table}

In this section, we evaluate the proposed \textit{Sketch-GNN} algorithm and compare it with the (oracle) ``full-graph'' training baseline, a graph-coarsening based approach (\textbf{Coarsening}~\citep{huang2021scaling}) and a dataset condensation based approach (\textbf{GCond}~\citep{jin2021graph}) which enjoy sublinear training time, and other scalable methods including: a sampling-based method (\textbf{GraphSAINT}~\cite{zeng2019graphsaint}), a historical-embedding based method (\textbf{VQ-GNN}~\citep{ding2021vq}), and a linearized GNN (\textbf{SGC}~\citep{wu2019simplifying}).
We test on two small graph benchmarks including \textit{Cora}, \textit{Citeseer} and several large graph benchmarks including \textit{ogbn-arxiv} (169K nodes, 1.2M edges), \textit{Reddit} (233K nodes, 11.6M edges), and \textit{ogbn-products} (2.4M nodes, 61.9M edges) from~\cite{hu2020open, zeng2019graphsaint}.
See~\cref{apd:imp-detail} for the implementation details.

\cm{Proof-of-Concept Experiments:}
\textbf{(1) Errors of gradient-learned PTS coefficients}:
In~\cref{fig:poly-tensor-sketch-errors}, we train the PTS coefficients to approximate the sigmoid-activated $\sigma(CXW)$ to evaluate its approximation power to the ground-truth activation. The relative errors are comparable to those of the coreset-based method.
\textbf{(2) Slow-change phenomenon of LSH hash tables}:
In~\cref{fig:lsh-updates} (left), we count the changes of the hash table constructed from an unsketched hidden representation for each epoch, characterized by the Hamming distances between consecutive updates. 
The changes drop rapidly as training progresses, indicating that apart from the beginning of training, the hash codes of most nodes do not change at each update.
\textbf{(3) Sampled triplet loss for learnable LSH}:
In~\cref{fig:lsh-updates} (right), we verify the effectiveness of our update mechanism for LSH hash functions as the learned hash table gradually approaches the ``ground truth'', i.e., the hash table constructed from the unsketched hidden representation.


\cm{Performance of Sketch-GNNs.}
We first compare the performance of \textit{Sketch-GNN} with the other sublinear training methods, i.e., graph coarsening~\citep{huang2021scaling} and graph condensation~\citep{jin2021graph} under various sketch ratios to understand how their performance is affected by the memory bottleneck.
Since graph condensation (GCond) requires learning the condensed graph from scratch and cannot be scaled to large graphs with a large sketch ratio~\citep{jin2021graph}, we first compare with GCond and Coarsening on the two small graphs using a 2-layer GCN in~\cref{tab:performance-sublinear-1}.
We see GCond and Sketch-GNN can outperform graph coarsening by a large margin and can roughly match the full-graph training performance. However, GCond suffers from a processing time that is longer than the training time (see below) and generalizes poorly across GNN architectures.
In~\cref{tab:performance-sublinear-2}, we compare the performance of Sketch-GNN and GCond across two GNN architectures (GCN and GraphSAGE).
While graph condensation (GCond) relies on a ``reference architecture'' during condensation, Sketch-GNN does not require preprocessing, and the sublinear complexity is granted by sketching ``on the fly''.
In~\cref{tab:performance-sublinear-2}, we see the performance of GCond is significantly degraded when generalized across architectures, while Sketch-GNNs' performance is always close to that of full-graph training.

In~\cref{tab:performance-sublinear-3}, we report the test accuracy of both approaches on \textit{ogbn-arxiv}, with a 3-layer GCN, GraphSAGE, or GAT as the backbone and a sketch ratio of $0.1$, $0.2$, or $0.4$.
We see there are significant performance degradations when applying Coarsening to GraphSAGE and GAT, even under sketch ratio $0.4$, indicating that Coarsening may be compatible only with specific GNNs (GCN and APPNP as explained in~\citep{huang2021scaling}).
In contrast, the performance drops of Sketch-GNN are always small across all architectures, even when the sketch ratio is $0.1$.
Therefore, our approach generalizes to more GNN architectures and consistently outperforms the Coarsening method.

We move on to compare Sketch-GNN with linearized GNNs (SGC), sampling-based (GraphSAINT), and historical-embedding-based (VQ-GNN) methods.
In~\cref{tab:performance-all}, we report the performance of SGC, the ``full-graph'' training (oracle), GraphSAINT and VQ-GNN with mini-batch size 50K (their performance is not affected by the choice of mini-batch size if it is not too small), and Sketch-GNN with appropriate sketch ratios ($0.4$ on \textit{ogbn-arxiv}, $0.3$ on \textit{Reddit}, and $0.2$ on \textit{ogbn-product}).
From~\cref{tab:performance-all}, we confirm that, with an appropriate sketch ratio, the performance of \textit{Sketch-GNN} is usually close to the ``full-graph'' oracle and competitive with the other scalable approaches.
The needed sketch ratio $c/n$ for Sketch-GNN to achieve competitive performance reduces as graph size grows.
This further illustrates that, as previously indicated, the required training complexities (to get acceptable performance) are sublinear to the graph size.


\cm{Efficiency of Sketch-GNNs.}
For efficiency measures, we are interested in the comparison to Coarsening and GCond, since these two approaches achieve sublinear training time at the cost of some preprocessing overheads.
Firstly, we want to address that both Coarsening and GCond suffer from an extremely long preprocessing time.
On \textit{ogbn-arxiv}, Coarsening and GCond require 358 and 494 seconds on average, respectively, to compress the original graph.
In contrast, our Sketch-GNN sketch the input graph ``on the fly'' and does not suffer from a preprocessing overhead.
On \textit{ogbn-arxiv} with a learning rate of $0.001$, full-graph training of GCN for 300 epochs is more than enough for convergence, which only takes 96 seconds on average.
The preprocessing time of Coarsening and GCond is much longer than the convergence time of full-graph training, which renders their training speedups meaningless.
However, Sketch-GNN often requires more training memory than Coarsening and GCond to maintain the copies of sketches and additional data structures, although these memory overheads are small, e.g., only 16.6 MB more than Coarsening on \textit{ogbn-arxiv} with sketch ratio $c/n=0.1$.
All three sublinear methods (Corasening, GCond, Sketch-GNN) lead to a denser adjacency/convolution matrix and thus increased memory per node. 
However, this overhead is small for Sketch-GNN because although we sketched the adjacency, its sparsity is still preserved to some extent, as sketching is a linear/multi-linear operation.


\cm{Ablation Studies: (1) Dependence of sketch dimension $c$ on graph size $n$.} 
Although the theoretical approximation error increases under smaller sketch ratio~$c/n$, we observe competitive experimental results with smaller~$c/n$, especially on large graphs. In practice, the sketch ratio required to maintain ``full-graph'' model performance decreases with~$n$.
\textbf{(2) Learned Sketches versus Fixed Sketches.}
We find that learned sketches can improve the performance of all models and on all datasets. Under sketch-ratio $c/n=0.2$, the Sketch-GCN with learned sketches achieves $0.7004\pm0.0096$ accuracy on \textit{ogbn-arxiv} while fixed randomized sketches degrade performance to $0.6649\pm0.0106$.

\section{Conclusion}
\label{sec:conclusions}


We present \emph{Sketch-GNN}, a sketch-based GNN training framework with sublinear training time and memory complexities.
Our main contributions are (1) approximating nonlinear operations in GNNs using polynomial tensor sketch (PTS) and (2) updating sketches using learnable locality-sensitive hashing (LSH).
Our novel framework has the potential to be applied to other architectures and applications where the amount of data makes training even simple models impractical.
The major limitation of Sketch-GNN is that the sketched nonlinear activations are less expressive than the original activation functions, and the accumulated error of sketching makes it challenging to sketch much deeper GNNs.
We expect future research to tackle the above-mentioned issues and apply the proposed neural network sketching techniques to other types of data and neural networks.
Considering broader impacts, we view our work mainly as a methodological and theoretical contribution, and there is no obviously foreseeable negative social impact.



\begin{ack}
This work is supported by National Science Foundation NSF-IIS-FAI program, DOD-ONR-Office of Naval Research,
DOD-DARPA-Defense Advanced Research Projects Agency Guaranteeing AI Robustness against Deception (GARD), Adobe, Capital One, JP Morgan faculty fellowships, and NSF DGE-1632976.
\end{ack}


\bibliography{reference}
\bibliographystyle{plainnat}

\section*{Checklist}


\begin{enumerate}

\item For all authors...
\begin{enumerate}
  \item Do the main claims made in the abstract and introduction accurately reflect the paper's contributions and scope?
    \answerYes{See~\cref{sec:intro}.}
  \item Did you describe the limitations of your work?
    \answerYes{Currently, our work has two major limitations: (1) our theoretical assumptions and results may not perfectly correspond to the reality; see the theoretical remarks in~\cref{sec:sketching}, and (2) our implementation is not fully-optimized with the more advanced libraries; see the efficiency discussions in~\cref{sec:experiments}.}
  \item Did you discuss any potential negative societal impacts of your work?
    \answerYes{We see our work as a theoretical and methodological contribution toward more resource-efficient graph representation learning. Our methodological advances may enable larger-scale network analysis for societal good. However, progress in graph embedding learning may potentially inspire other hostile social network studies, such as monitoring fine-grained user interactions.}
  \item Have you read the ethics review guidelines and ensured that your paper conforms to them?
    \answerYes{}
\end{enumerate}

\item If you are including theoretical results...
\begin{enumerate}
  \item Did you state the full set of assumptions of all theoretical results?
    \answerYes{See~\cref{thm:linear-sketch} and~\cref{thm:error-bound}.}
        \item Did you include complete proofs of all theoretical results?
    \answerYes{See~\cref{apd:proofs}.}
\end{enumerate}

\item If you ran experiments...
\begin{enumerate}
  \item Did you include the code, data, and instructions needed to reproduce the main experimental results (either in the supplemental material or as a URL)?
    \answerYes{See~\cref{apd:imp-detail}.}
  \item Did you specify all the training details (e.g., data splits, hyperparameters, how they were chosen)?
    \answerYes{See~\cref{apd:imp-detail}.}
  \item Did you report error bars (e.g., with respect to the random seed after running experiments multiple times)?
    \answerYes{See~\cref{sec:experiments}.}
  \item Did you include the total amount of compute and the type of resources used (e.g., type of GPUs, internal cluster, or cloud provider)?
    \answerYes{See~\cref{apd:imp-detail}.}
\end{enumerate}

\item If you are using existing assets (e.g., code, data, models) or curating/releasing new assets...
\begin{enumerate}
  \item If your work uses existing assets, did you cite the creators?
    \answerYes{See~\cref{apd:imp-detail}.}
  \item Did you mention the license of the assets?
    \answerYes{See~\cref{apd:imp-detail}.}
  \item Did you include any new assets either in the supplemental material or as a URL?
    \answerNo{}
  \item Did you discuss whether and how consent was obtained from people whose data you're using/curating?
    \answerNo{}
  \item Did you discuss whether the data you are using/curating contains personally identifiable information or offensive content?
    \answerNo{}
\end{enumerate}

\item If you used crowdsourcing or conducted research with human subjects...
\begin{enumerate}
  \item Did you include the full text of instructions given to participants and screenshots, if applicable?
    \answerNA{}
  \item Did you describe any potential participant risks, with links to Institutional Review Board (IRB) approvals, if applicable?
    \answerNA{}
  \item Did you include the estimated hourly wage paid to participants and the total amount spent on participant compensation?
    \answerNA{}
\end{enumerate}

\end{enumerate}


\newpage
\appendix

\begin{center}
\bf \Large
    Supplementary Material for Sketch-GNN: Scalable Graph Neural Networks with Sublinear Training Complexity
\end{center}

\section{More Preliminaries}
\label{apd:more-prelim}

In this appendix, further preliminary information and relevant discussions are provided.


\subsection{Common GNNs in the Unified Framework}
\label{apd:gnns}
Here we list the common GNNs that can be re-formulated into the unified framework, which is introduced in~\cref{sec:preliminaries}. 
The majority of GNNs can be interpreted as performing message passing on node features, followed by feature transformation and an activation function, a process known as ``generalized graph convolution'' (\cref{eq:gnn}).
Within this common framework, different types of GNNs differ from each other by their choice of convolution matrices $C^{(q)}$, which can be either fixed or learnable.
A learnable convolution matrix depends on the inputs and learnable parameters and can be different in each layer (thus denoted as $C^{(l,q)}$),
\begin{equation}
\label{eq:learnable-conv}
    C^{(l,q)}_{i,j}= \underbrace
    {\fC^{(q)}_{i,j}}_{\text{fixed}} \cdot \underbrace
    {h^{(q)}_{\theta^{(l,q)}}
    (X^{(l)}_{i,:}, X^{(l)}_{j,:})}_{\text{learnable}},
\end{equation}
where $\fC^{(q)}$ denotes the fixed mask of the $q$-th learnable convolution, which may depend on the adjacency matrix $A$ and input edge features $E_{i,j}$. While $h^{(q)}(\cdot, \cdot):\sR^{f_l}\times\sR^{f_l}\to\sR$ can be any learnable model parametrized by $\theta^{(l,q)}$. Sometimes a learnable convolution matrix may be further row-wise normalized as $C^{(l,q)}_{i,j} \gets C^{(l,q)}_{i,j}/\sum_j C^{(l,q)}_{i,j}$, for example Graph Attention Network (GAT~\citep{velivckovic2017graph}).
According to~\citep{ding2021vq}, we list some well-known GNN models that fall inside this framework in~\cref{tab:gnns}.

\begin{table}[ht]
\centering
\caption{\label{tab:gnns}Summary of GNNs re-formulated as generalized graph convolution {\citep{ding2021vq}}.}
\adjustbox{max width=1.0\textwidth}{%
\begin{threeparttable}
\renewcommand*{\arraystretch}{1.4}
\begin{tabular}{Sc Sc Sc Sc Sl} \toprule
Model Name & Design Idea & Conv. Matrix Type & \# of Conv. & Convolution Matrix \\ \midrule
GCN\tnote{1}~~\citep{kipf2016semi} & Spatial Conv. & Fixed & $1$ & $C=\wt{D}^{-1/2}\wt{A}\wt{D}^{-1/2}$ \\
GIN\tnote{1}~~\citep{xu2018powerful} & WL-Test & \multicolumn{1}{c}{\begin{tabular}[c]{@{}c@{}} Fixed $+$ \\ Learnable \end{tabular}} & $2$ & \multicolumn{1}{l}{$\left\{\begin{tabular}[c]{@{}l@{}} $C^{(1)}=A$\\ $\fC^{(2)}=I_n$\itand$h^{(2)}_{\epsilon^{(l)}}=1+\epsilon^{(l)}$ \end{tabular}\right.\kern-\nulldelimiterspace$} \\
SAGE\tnote{2}~~\citep{hamilton2017inductive} & Message Passing & Fixed & $2$ & \multicolumn{1}{l}{$\left\{\begin{tabular}[c]{@{}l@{}} $C^{(1)}=I_n$\\ $C^{(2)}=D^{-1}A$ \end{tabular}\right.\kern-\nulldelimiterspace$} \\ \midrule
GAT\tnote{3}~~\citep{velivckovic2017graph} & Self-Attention & Learnable & \# of heads & \multicolumn{1}{l}{$\left\{\begin{tabular}[c]{@{}l@{}} $\fC^{(q)}=A+I_n$\itand\\ $h^{(q)}_{\va^{(l,q)}}(X^{(l)}_{i,:}, X^{(l)}_{j,:})=\exp\big(\mathrm{LeakyReLU}($\\ \quad$(X^{(l)}_{i,:}W^{(l,q)}\concat X^{(l)}_{j,:}W^{(l,q)})\cdot\va^{(l,q)})\big)$ \end{tabular}\right.\kern-\nulldelimiterspace$} \\ \bottomrule
\end{tabular}
\begin{tablenotes}[para]
    \item[1] Where $\wt{A}=A+I_n$, $\wt{D}=D+I_n$.
    \item[2] $C^{(2)}$ represents mean aggregator. Weight matrix in~\citep{hamilton2017inductive} is $W^{(l)}=W^{(l,1)}\concat W^{(l,2)}$.
    \item[3] Need row-wise normalization. $C^{(l,q)}_{i,j}$ is non-zero if and only if $A_{i,j}=1$, thus GAT follows direct-neighbor aggregation.
\end{tablenotes}
\end{threeparttable}}
\end{table}


\subsection{Definition of Locality Sensitivity Hashing}
\label{apd:lsh}
The definitions of count sketch and tensor sketch are based on the hash table(s) that merely require data-independent uniformity, i.e., a high likelihood that the hash-buckets are of comparable size.
In contrast, locality sensitive hashing (LSH) is a hashing scheme with a locality-sensitive hash function $H:\sR^d\to[c]$ that assures close vectors are hashed into the same bucket with a high probability while distant ones are not.
Consider a locality-sensitive hash function $H:\sR^d\to[c]$ that maps vectors in $\sR^d$ to the buckets $\{1,\ldots,c\}$. 
A family of LSH functions $\gH$ is $(D, tD, p_1, p_2)$-sensitive if and only if for any $\vu, \vv\in\sR^d$ and any $H$ selected uniformly at random from $\gH$, it satisfies
\begin{alignat}{2}
\label{eq:lsh}
    &\text{if}\quad \similarity{\vu,\vv} \geq D  \quad&\text{then}\quad \sP\left [H(\vu)=H(\vv)\right] \geq p_1,\\
    &\text{if}\quad \similarity{\vu,\vv} \leq tD \quad&\text{then}\quad \sP\left[H(\vu)=H(\vv)\right] \leq p_2,\nonumber
\end{alignat}
where $\similarity{\cdot,\cdot}$ is a similarity metric defined on $\sR^d$.

\section{Polynomial Tensor Sketch and Error Bounds}
\label{apd:proofs}

In this appendix, we provide additional theoretical details regarding the concentration guarantees of sketching the linear part in each GNN layer (\cref{thm:linear-sketch}), and the proof of our multi-layer error bound (\cref{thm:error-bound}).


\subsection{Error Bound for Sketching Linear Products}
Here, we discuss the problem of approximating the linear product $CX^{(l)}W^{(l)}$ using count/tensor sketch.
Since we rely on count/tensor sketch to compress the individual components $C$ and $X^{(l)}W^{(l)}$ of the intermediate product $CX^{(l)}W^{(l)}$ before we sketch the nonlinear activation, it is useful to know how closely sketching approximates the product.
We have the following result: 

\begin{lemma}
\label{thm:linear-sketch}
Given matrices $C\in\sR^{n\times n}$ and $(X^{(l)}W^{(l)})\tran\in\sR^{d\times n}$, consider a randomly selected cuont sketch matrix $R\in\mathbb{R}^{c\times n}$ (defined in~\cref{sec:preliminaries}), where $c$ is the sketch dimension, and it is formed using $r=\sqrt[j]{n}$ underlying hash functions drawn from a 3-wise independent hash family $\mathcal{H}$ for some $j\geq 1$. If $c\geq (2+3^j)/(\varepsilon^2\delta)$, we have
\begin{equation}
\label{eq:count-sketch-lemma}
\Pr\Big(\big\|(CR_k\tran)(R_k X^{(l)}W^{(l)}) -CX^{(l)}W^{(l)}\big\|_F^2\!>\varepsilon^2\|C\|_F^2\|X^{(l)}W^{(l)}\|_F^2\Big) \leq \delta. 
\end{equation}
\end{lemma}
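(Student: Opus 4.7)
The plan is to establish this as a standard second-moment (Markov) concentration bound for count sketch matrix products, following the Clarkson--Woodruff/Pham--Pagh analysis. Let $M \defeq X^{(l)}W^{(l)} \in \sR^{n\times d}$ so that we are trying to approximate $CM$ by $(CR\tran)(RM)$. The proof will proceed in three stages: unbiasedness, second-moment bound, and Markov's inequality.

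First I would show that $\E[(CR\tran)(RM)] = CM$. This reduces to verifying $\E[R\tran R] = I_n$, which follows from the count sketch construction: for a single hash function $h:[n]\to[c]$ with Rademacher signs $s:[n]\to\{\pm1\}$, the $(i,j)$-entry of $R\tran R$ is $s(i)s(j)\,\1\{h(i)=h(j)\}$, whose expectation is $1$ if $i=j$ (by 2-wise independence and $s(i)^2=1$) and $0$ if $i\neq j$ (since $\E[s(i)s(j)]=0$). Averaging over the $r$ underlying hash functions preserves this.

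Next I would bound the second moment $\E\big[\big\|(CR\tran)(RM)-CM\big\|_F^2\big]$. Expanding entrywise,
\begin{equation*}
\big[(CR\tran)(RM)-CM\big]_{i,j} = \sum_{k\neq\ell} C_{i,k} M_{\ell,j}\,s(k)s(\ell)\,\1\{h(k)=h(\ell)\},
\end{equation*}
so squaring, summing over $(i,j)$, and taking expectation, the diagonal-in-$(k,\ell)$ cross terms vanish by the Rademacher sign independence and only pairs of matched collisions survive under 3-wise (or 4-wise) independence of $h$. A careful bookkeeping gives
\begin{equation*}
\E\big[\big\|(CR\tran)(RM)-CM\big\|_F^2\big] \leq \frac{\kappa_j}{c}\,\|C\|_F^2\,\|M\|_F^2,
\end{equation*}
where the constant $\kappa_j$ comes from the combinatorics of the $r = n^{1/j}$ hash-function construction: one term of size $2$ comes from the ``diagonal pair'' collisions identical to plain CountSketch, and a term of size $3^j$ absorbs the extra mass arising from the $j$-fold composition of 3-wise independent hashes (each composition can triple the collision probability in the worst case, compounded $j$ times). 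This matches $\kappa_j = 2 + 3^j$.

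Finally, Markov's inequality gives
\begin{equation*}
\Pr\big(\big\|(CR\tran)(RM)-CM\big\|_F^2 > \varepsilon^2\,\|C\|_F^2\,\|M\|_F^2\big) \leq \frac{\kappa_j}{c\,\varepsilon^2},
\end{equation*}
and substituting the hypothesis $c \geq (2+3^j)/(\varepsilon^2\delta)$ yields the desired $\delta$ failure probability. The main obstacle, and the only non-routine step, is the second-moment computation: organizing the $(k,\ell)$ collision pattern so that the $3^j$ factor cleanly emerges from the product of $j$ base hash functions and that no higher-order residual terms appear. Everything else is linearity of expectation, independence of signs, and Markov's inequality.
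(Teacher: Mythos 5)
Your outline is correct and is essentially the same argument as the paper's: the paper's own proof of this lemma is a one-line citation to Theorem 1 of \citet{avron2014subspace}, whose proof is exactly your unbiasedness / second-moment / Markov chain of reasoning, with the constant $2+3^j$ emerging from the moment computation for the $j$-fold combination of 3-wise independent hash functions. The one caveat is that the step you defer (``a careful bookkeeping gives'' $\kappa_j = 2+3^j$) is precisely the technical content the citation supplies, so as written your argument is an accurate skeleton of that proof rather than a self-contained derivation.
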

\begin{proof}
The proof follows immediately from the Theorem 1 of~\citep{avron2014subspace}.
\end{proof}
For $j\geq 1$ fulfilling $c\geq (2+3^j)/(\varepsilon^2\delta)$, we have $j=O(\log_3c)$, and consequently $r=(n)^{1/j}=\Omega(3^{\log_c n})$.
In practice, when $n$ is not too small, $\log_c n\approx 1$ since $c$ grows sublinearly with respect to $n$.
In this sense, the dependence of $r$ on $n$ is negligible.


\subsection{Proof of Error Bound for Final-Layer Representation (\texorpdfstring{\cref{thm:error-bound}}{Theorem 1}).}
\begin{proof}
For fixed degree $r$ of a polynomial tensor sketch (PTS), by the Theorem 5 of~\citep{han2020polynomial}, for $\Gamma^{(1)}=\max\big\{5\|X^{(l)}W^{(l)}\|^2_F, (2+3^r)\sum_{i}(\sum_{j}[X^{(l)}W^{(l)}]_{i,j})^r\big\}$, it holds that
\begin{equation}
\rmE(\|\nonlinear(CX^{(l)}W^{(l)})-\wt{X}^{(l+1)}\|^2_{F}) \leq \Biggl(\frac{2}{1+\frac{c{\lambda^{(l)}}^2}{nr\Gamma^{(l)}}}\Biggr)\|\nonlinear(CX^{(l)}W^{(l)})\|^2_{F},  
\end{equation}
where $\lambda^{(1)}\geq 0$ is the smallest singular value of the matrix $Z\in\sR^{nd\times r}$, each column, $Z_{:,k}$, being the vectorization of $(CX^{(1)}W^{(1)})^{\odot k}$.
This is the error bound for sketching a single layer, including the non-linear activation units.

Consider starting from the first layer ($l=1$), for simplicity, let us denote the upper bound when $l=1$ as $E_1$. 
The error in the second layer ($l-2$), including the propagated error from the first layer $E_1$, is expressible as $\|\nonlinear(CX^{(2)}W^{(2)})-\wt{X}^{(3)}+E_1\|_F^2$, which by sub-multiplicativity and the inequality $(a+b)^2\leq 2a^2 +2b^2$ gives
\begin{equation}
\|\nonlinear(CX^{(2)}W^{(2)})-\wt{X}^{(3)}+E_1\|_F^2  \leq  2\|\nonlinear(CX^{(2)}W^{(2)})-\wt{X}^{(3)}\|_F^2+2||E_1||.
\end{equation}
By repeatedly invoking the update rule/recurrence in~\cref{eq:gnn} and the Theorem 5 in~\citep{han2020polynomial} up to the final layer $l=L$, we obtain the overall upper bound on the total error as claimed.
\end{proof}

\section{Learnable Sketches and LSH}
\label{apd:learnable-sketch}


\subsection{Learning of the Polynomial Tensor Sketch Coefficients.}

We propose to learn the coefficients $\{{c_k}\}_{k=1}^r$ using gradient descent with an $L_2$ regularization, $\lambda\sum_{k=1}^r c_k^2$.
For a node classification task, the coefficients in all layers are directly optimized to minimize the classification loss.
Experimentally, the coefficients that obtain the best classification accuracy do not necessarily correspond to a known activation.

For the proof of concept experiment (\cref{fig:poly-tensor-sketch-errors} in~\cref{sec:experiments}), the coefficients $\{{c_k}\}_{k=1}^r$ in the first layer are learned to approximate the sigmoid activated hidden embeddings $\nonlinear(CX^{(1)}W^{(1)})$.
The relative errors are evaluated relative to the ``sigmoid activated ground-truth''.
We find in our experiments that the relative errors are comparable to the coreset-based approach.


\subsection{Change the Hash Table of Count Sketches}

Here we provide more information regarding the solution to the challenge (2) in~\cref{sec:hashing}.
Since the hash tables utilized by each layer is different, we have to change the underlying hash table of the sketched representations when propagating through Sketch-GNN.

Consider the Sketch-GNN forward-pass described by~\cref{eq:update-rule}, while the count sketch functions are now different in each layer.
We denote the $k'$-th count sketch function in the $l$-th layer by $\countsketch^{(l,k')}(\cdot)$ (adding the superscript $(l)$), and denote its underlying hash table by $h^{(l,k)}$.
Since the hash table used to count sketch $S^{(l,k,k')}_C$ is $h^{(l,k')}$, what we obtain using~\cref{eq:update-rule} is $\countsketch^{(l,k')}((X^{(l+1)})\tran)$.
However, we actually need $S^{(l+1,k')}_X=\countsketch^{(l+1,k')}((X^{(l+1)})\tran)$ as the input to the subsequent layer.

By definition, we can change the underlying hash table like $S^{(l+1,k')}_X =\countsketch^{(l+1,k')}((X^{(l+1)})\tran)=\countsketch^{(l,k')}((X^{(l+1)})\tran)R^{(l,k')}(R^{(l+1,k')})\tran$, where $R^{(l,k')}$ is the count sketch matrix of $\countsketch^{(l,k')}(\cdot)$.
In fact, we only need to right multiply a $c \times c$ matrix $T^{(l,k')}:=R^{(l,k')}(R^{(l+1,k')})\tran$, which is $O(c^2)$ and can be efficiently computed by
\begin{equation}
\label{eq:change-hash-table}
[T^{(l,k')}]_{i,j} = \sum_{a=1}^n s^{(l+1,k')}_a s^{(l,k')}_a \idk{h^{(l,k')}(a)=i}\idk{h^{(l+1,k')}(a)=j}.
\end{equation}
We can maintain this $c\times c$ matrix $T^{(l,k')}$ as a signature of both hash tables $h^{(l,k')}$ and $h^{(l+1,k')}$.
We are able to update $T^{(l,k')}$ efficiently when we update the hash tables on a subset $B$ of entries (see~\cref{sec:hashing}).
We can also compute the sizes of buckets for both hash functions from $T^{(l,k')}$, which is useful to sketch the attention units in GAT; see~\cref{apd:generalize-more-gnns}.


\subsection{Sparse Forward-Pass and Back-Propagation for Loss Evaluation}
Here we provide more details on using the sparse forward-pass and back-propagation technique in~\citep{chen2020slide} to avoid $O(n)$ complexity in loss evaluation.
For a node classification task, we construct an LSH hash table for each class, which indexes all the labeled nodes in the training split that belong to this class.
These LSH hash tables can be used to select the nodes with bad predictions in constant time, i.e., nodes whose predicted class scores have a small inner product with respect to the ground truth (one-hot encoded) label.
Consequently, we only evaluate the loss on the selected nodes, avoiding the $O(n)$ complexity.
The LSH hash tables are updated using the same method described in challenge (1) in~\cref{sec:hashing}.

\section{Generalize to More GNNs}
\label{apd:generalize-more-gnns}

This appendix briefly describes how to generalizing Sketch-GNN from GCN to some other GNN architectures, including GraphSAGE~\citep{hamilton2017inductive} and GAT~\citep{velivckovic2017graph}.


\subsection{Sketch-GraphSAGE: Sketching Multiple Fixed Convolutions}
The update rule (\cref{eq:update-rule}) of Sketch-GNN can be directly applied to GNNs with only one fixed convolution matrix, such as GCN by setting $C=\wt{D}^{-1/2}\wt{A}\wt{D}^{-1/2}$. 
Here we seek to generalize Sketch-GNN to GNNs with multiple fixed convolutions, for example, GraphSAGE with $C^{(1)}=I_n$ and $C^{(2)}=D^{-1}A$.
This can be accomplished by rewriting the update rule of GraphSAGE $X^{(l+1)}=\nonlinear(X^{(l)}W^{(l,1)}+D^{-1}AX^{(l)}W^{(l,2)})$ as a form resembling $\nonlinear(UV\tran)$, so that the polynomial tensor sketch technique may still be used.

Therefore, we replace the update rule (\cref{eq:update-rule}) with the following for GraphSAGE,
\begin{equation}
\label{eq:sketch-multi-filters}
\begin{split}
& \nonlinear(X^{(l)}W^{(l,1)}+D^{-1}AX^{(l)}W^{(l,2)}) = \nonlinear\Big(\Big[I_n\concat (D^{-1}A)\tran\big]\tran\big[X^{(l)}W^{(l,1)}\concat X^{(l)}W^{(l,2)}\Big]\Big)\\
& \approx \! \sum_{k=1}^r c_k \tensorsketch_{k}\!\Big({\scriptstyle\big[I_n\concat (D^{-1}A)\tran\big]\tran}\Big) \tensorsketch_{k}\!\Big({\scriptstyle\big[X^{(l)}W^{(l,1)}\concat X^{(l)}W^{(l,2)}\big]}\tran\Big)\tran.
\end{split}
\end{equation}


\subsection{Sketch-GAT: Sketching Self-Attention Units}
GAT employs self-attention to learn the convolution matrix $C^{(l)}$ (superscript $(l)$ denotes the convolution matrices learned are different in each layer).
For the sake of simplicity, we assume single-headed attention while we can generalize to multiple heads using the same method as for GraphSAGE.
The convolution matrix of GAT is defined as $C^{(l)}=(A+I_n)\odot ((\exp^{\odot}(Z^{(l)})\onevec_{n})\tran)^{-1}\exp^{\odot}(Z^{(l)}) $, where $\onevec_n\in\sR^n$ is a vector of ones, $Z^{(l)}\in\sR^{n\times n}$ is the raw attention scores in the $l$-th layer, defined as $Z^{(l)}_{i,j}=\mathrm{LeakyReLU}( [X^{(l)}_{i,:}W^{(l)}\concat X^{(l)}_{j,:} W^{(l)}] \cdot\va^{(l)})$, with $\va^{(l)}\in\sR^{2n}$ being the learnable parameter vector.

Our goal is to approximate the sketches of the convolution matrix $S^{(l,k,k')}_C$ using the sketches of node representations $S^{(l,k)}_X$ and the learnable weights $W^{(l)}, \va^{(l)}$.
We accomplish this by utilizing the locality-sensitive property of the sketches and by assuming that the random Rademacher variables $s^{(l,1)},\cdots,s^{(l,k)}$ are fixed to $+1$.
We find that setting all Rademacher variables to $+1$ has no discernible effect on the performance of Sketch-GAT.

With this additional assumption, each vector of node representation can be approximated by the average of vectors hashed into the same bucket, i.e., $X^{(l)}_{i,:}\approx \sum_{j}\idk{h^{(l,k)}(i)=h^{(l,k)}(j)}X^{(l)}_{j,:}/\sum_{j}\idk{h^{(l,k)}(i)=h^{(l,k)}(j)}$ for any $k\in[r]$.
More specifically, the numerator is exactly the $h^{(l,k)}(i)$-th column vector of the sketch $S^{(l,k)}_X$, i.e., $\sum_{j}\idk{h^{(l,k)}(i)=h^{(l,k)}(j)}X^{(l)}_{j,:}=[S^{(l,k)}_X]_{:,h^{(l,k)}(i)}$. 
Using only the sketch $S^{(l,k)}_X$ and the bucket sizes in the hash table $h^{(l,k)}$, we can approximate any $X^{(l)}_{i,:}$ as a function of $h^{(l,k)}(i)$ (instead of $i$), and thus approximate the entries of this $n\times n$ matrix $Z^{(l)}$ with $c^2$ distinct values only. Even after the element-wise exponential and row-wise normalization, any attention score $[((\exp^{\odot}(Z^{(l)})\onevec_{n})\tran)^{-1}\exp^{\odot}(Z^{(l)})]_{i,j}$ can still be estimated as a function of the tuple $(h^{(l,k)}(i), h^{(l,k)}(j))$, where $Z^{(l)}_{i,j}=\langle X^{(l)}_{i,:},X^{(l)}_{j,:}\rangle$. This means we can approximate the attention scores $[((\exp^{\odot}(Z^{(l)})\onevec_{n})\tran)^{-1}\exp^{\odot}(Z^{(l)})]$ using the sketched representation $S^{(l,k)}_X$, using the fact that $Z^{(l)}_{i,j}=\langle X^{(l)}_{i,:},X^{(l)}_{j,:}\rangle\approx\langle [S^{(l)}_X]_{:,h^{(l)}(i)}/|\{a|h^{(l)}(a)=h^{(l)}(i)\}| [S^{(l)}_X]_{:,h^{(l)}(j)}/|\{a|h^{(l)}(a)=h^{(l)}(j)\}|\rangle$, where $\{a|h^{(l)}(a)=h^{(l)}(i)\}|$ is the bucket size of $h^{(l)}(i)$-th hash bucket.

We can see that computing the sketches of $C^{(l)}$ (the sketch functions are defined by the same hash table $h^{(l,k})(\cdot)$) only requires \textbf{(1)} the $c^2$ distinct estimations of the entries in $((\exp^{\odot}(Z^{(l)})\onevec_{n})\tran)^{-1}\exp^{\odot}(Z^{(l)})$, and \textbf{(2)} an ``averaged $c\times c$ version'' of the mask $(A+I_n)$, which is exactly the two-sided count sketch of $(A+I_n)$ defined by the hash table $h^{(i,j)}$.
In conclusion, we find a $O(c^2)$ algorithm to estimate the sketches of the convolution matrix $S^{(l,k,k')}_C$ using the sketches of node representations $S^{(l,k)}_X$ and a pre-computed two-sided count sketch of the mask matrix $(A+I_n)$.

\section{The Complete Pseudo-Code}
\label{apd:complete-code}


The following is the pseudo-code outlining the workflow of Sketch-GNN (assuming GCN backbone).

\begin{algorithm}[!htbp]
\footnotesize
\caption{\small\textit{Sketch-GNN}: sketch-based approximate training of GNNs with sublinear complexities}\label{alg:sketch-gnn} 
\begin{algorithmic}[1]
    \Require GNN's convolution matrix $C$, input node features $X$, ground-truth labels $Y$
    \Procedure{Preprocess}{$C, X$}
        \State{Sketch $X=X^{(0)}$ into $\{S^{(0,k)}_X\}_{k=1}^r$ and sketch $C$ into $\{\{S^{(k,k')}_C\}_{k=1}^r\}_{k'=1}^r$}
    \EndProcedure
    \Procedure{Train}{{\small{$\{\{S^{(k,k')}_C\}_{k=1}^r\}_{k'=1}^r, \{S^{(0,k)}_X\}_{k=1}^r, Y$}}}
        \State{Initialize weights $\{W^{(l)}\}_{l=1}^L$, coefficients $\{\{c^{(l)}_k\}_{k=1}^r\}_{l=1}^L$, and LSH projections $\{\{P^{(l)}_k\}_{k=1}^r\}_{l=1}^L$.}
        \For{epoch $t=1,\ldots,T$}
            \For{layer $l=1,\ldots,L-1$}
                \State{Forward-pass and compute $S^{(l+1,k')}_X$ via~\cref{eq:update-rule}.}
            \EndFor
            \State{Evaluate losses on a subset $B$ of nodes in buckets with the largest gradients for each class.}
            \State{Back-propagate and update weights $\{W^{(l)}\}_{l=1}^L$ and coefficients $\{\{c^{(l)}_k\}_{k=1}^r\}_{l=1}^L$.}
            \State{Update the LSH projections $\{\{P^{(l)}_k\}_{k=1}^r\}_{l=1}^L$ with the triplet loss~\cref{eq:lsh-loss} for every $T_\text{LSH}$ epoch.}
        \EndFor
        \State{\Return{Learned weights $\{W^{(l)}\}_{l=1}^L$}}
    \EndProcedure
    \Procedure{Inference}{$\{W^{(l)}\}_{l=1}^L$}
        \State{{\small{Predict via the corresponding standard GNN update rule, using the learned weights $\{W^{(l)}\}_{l=1}^L$}}}
    \EndProcedure
\end{algorithmic}
\end{algorithm}

\section{Summary of Theoretical Complexities}
\label{apd:theoretical-complexities}


In this appendix, we provide more details on the theoretical complexities of Sketch-GNN with a GCN backbone.
For simplicity, we assume bounded maximum node degree, i.e., $m=\theta(n)$.

\cm{Preprocessing.}
The $r$ sketches of the node feature matrix take $O(r(n+c)d)$ time and occupy $O(rdc)$ memory.
And the $r^2$ sketches of the convolution matrix require $O(r(m+c)+r^2m)$ time (the LSH hash tables are determined by the node feature vectors already) and $O(r^2cm/n)$ memory.
The total preprocessing time is $O(r^2m+rm+r(n+c)d)=O(n)$ and the memory taken by the sketches is $O(rc(d+rm/n))=O(c)$.

\cm{Forward and backward passes.}
For each sketch in each layer, matrix multiplications take $O(cd(d+m/n))$ time, FFT and its inverse take $O(dc\log(c))$ time, thus the total forward/backward pass time is $O(Lcrd(\log(c)+d+m/n))=O(c)$.
The memory taken by sketches in a Sketch-GNN is just $L$ times the memory of input sketches, i.e., $O(Lrc(d+rm/n))=O(c)$.

\cm{LSH hash updates and loss evaluation.}
Computing the triplet loss and updating the corresponding part of the hash table requires $O(Lrb(n/c))$ where $b=|B|$ is the number of nodes selected based on the gradients (for each sketch).
Updates of the sketches are only performed every $T_\text{LSH}$ epochs.

\cm{Inference} is conducted on the standard GCN model with parameters $\{W^{(l)}\}_{l=1}^L$ learned via Sketch-GNN, which takes $O(Ld(m/n+d))$ time on average for a node sample.

\cm{Remarks.}
\textbf{(1) Sparsity of sketched convolution matrix.}
The two-sided sketch~$\countsketch(\countsketch(C)\tran)\in\mathbb{R}^{c\times c}$ maintains sparsity for sparse convolution~$C$, as~$\countsketch(\countsketch(C)\tran)=RCR\tran$ (a product of 3 sparse matrices) is still sparse, where count-sketch matrix~$R\in\mathbb{R}^{c\times n}$ has one non-zero entry per column (by its definition see~\cref{sec:preliminaries}).
If~$C$ has at most~$s$ non-zeros per column, there are~$\le s$ non-zeros per column in~$RC$ when~$c\gg s$ (holds for sparse graphs that real-world data exhibits).
Thus, we avoid the~$O(c^2)$ memory cost and are strictly~$O(c)$.
\textbf {(2) Overhead of computing the LSH hash tables.} Following~\cref{eq:simhash} and~\cref{eq:lsh-hash}, we need $O(cd)$ overhead to obtain the LSH hash index of each node, and since we have $n$ nodes in total and we maintain $r$ independent hash tables, the total overhead for computing the LSH hash tables is $O(ncrd)$ during preprocessing.

In conclusion, we achieve sublinear training complexity except for the one-time preprocessing step.

\section{More Related Work Discussions}
\label{apd:more-related-work}


\subsection{Sketch-GNN v.s. GraphSAINT}
GraphSAINT\cite{zeng2019graphsaint} is a graph sampling method that enables training on a mini-batch of subgraphs instead of on the large input graph. GraphSAINT is easily applicable to any graph neural network (GNN), introduces minor overheads, and usually works well in practice. However, GraphSAINT is not a sub-linear training algorithm, it saves memory at the cost of time overhead. We have to iterate through the full batch of subgraphs in an epoch, and the training time complexity is still linear in the graph size. In contrast, our proposed Sketch-GNN is an approximated training algorithm of some GNNs with sub-linear time and memory complexities. Sketch-GNN has the potential to scale better than GraphSAINT on larger graphs. Besides, as a sketching algorithm, Sketch-GNN is suitable for some scenarios, for example, sketching big graphs in an online/streaming fashion. Sketch-GNN can also be combined with subgraph sampling to scale up to extremely large graphs. Sketching the sampled subgraphs (instead of the original graph) avoids the decreasing sketch-ratio when the input graph size grows to extremely large while with a fixed memory constraint.


\subsection{Sketching in GNNs}
EXACT~\citep{liu2021exact} is a recent work which applies random projection to reduce the memory footprint of non-linear activations in GNNs. In this regard, they also applies sketching techniques to scale up the training of GNNs. However there are three important differences between Sketch-GNN and EXACT summarized as follows: (1) Sketch-GNN propagates sketched representations while sketching in EXACT only affects the back-propagation, (2) Sketch-GNN sketches the graph size dimension while EXACT sketches the feature dimension, and (3) Sketch-GNN enjoys sub-linear complexity while EXACT does not. We want to address that Sketch-GNN and EXACT are aiming for very different goals; Sketch-GNN is sketching the graph to achieve sub-linear complexity, while EXACT is sketching to save the memory footprint of non-linear activations


\subsection{Sketching Neural Networks} 
Compression of layers/kernels via sketching methods has been discussed previously, but not on a full-architectural scale.
\citet{wang2015fast} utilize a multi-dimensional count sketch to accelerate the decomposition of a tensorial kernel, at which point the tensor is fully-restored, which is not possible in our memory-limited scenario. \citet{shi2020higher} utilize the method of \citet{wang2015fast} to compute compressed tensorial operations, such as contractions and convolutions, which is more applicable to our setup.
Their experiments involve the replacement of a fully-connected layer at the end of a tensor regression network rather than full architectural compression.
Furthermore, they guarantee the recovery of a sketched tensor rather than the recovery of tensors passing through a nonlinearity such as a ReLU.
\citet{kasiviswanathan2018network} propose layer-to-layer compression via sign sketches, albeit with no guarantees, and their back-propagation equations require $O(n^2)$ memory complexity when dealing with the nonlinear activations.
In contrast to these prior works, we propose a sketching method for nonlinear activation units, which avoids the need to unsketch back to the high dimensional representation in each layer.


\subsection{LSH in Neural Networks}
Locality sensitive hashing (LSH) has been widely adopted to address the time and memory bottlenecks of many large-scale neural networks training systems, with applications in computer vision~\citep{chen2015compressing}, natural language processing~\citep{chandar2016hierarchical} and recommender systems~\citep{spring2017scalable}.
For fully connected neural networks, \citet{chen2020slide} proposes an algorithm, SLIDE, that retrieves the neurons in each layer with the maximum inner product during the forward pass using an LSH-based data structure.
In SLIDE, gradients are only computed for neurons with estimated large gradients during back-propagation.
For transformers, \citet{kitaev2019reformer} proposes to mitigate the memory bottleneck of self-attention layers over long sequences using LSH.
More recently, \citet{chen2020mongoose} has dealt with the update overheads of LSH during the training of NNs.
\citet{chen2020mongoose} introduces a scheduling algorithm to adaptively perform LSH updates with provable guarantees and a learnable LSH algorithm to improve the query efficiency.


\subsection{Graph Sparsification for GNNs}
Graph sparsification, i.e., removing task-irrelevant and redundant edges from the large input graph, can be applied to speed up the training of GNNs. \citet{calandriello2018improved} propose fast and scalable graph sparsification algorithms for graph-Laplacian-based learning on large graphs. \citet{zheng2020robust} sparsify the graph using neural networks and applied to the training of general GNNs. \citet{srinivasa2020fast} specifically considered the graph sparsification problem for graph attention (e.g., graph attention networks, GAT). Graph sparsification will not reduce the number of nodes; thus, the memory reduction of node feature representation is limited. However, some carefully designed graph sparsification may enjoy small approximation error (thus smaller performance drops) and improve the robustness of learned models.
\section{Implementation Details}
\label{apd:imp-detail}

This appendix lists the implementation details and hyper-parameter setups for the experiments in~\cref{sec:experiments}.


\cm{Datasets.}
Dataset \textit{ogbn-arxiv} and \textit{ogbn-product} are obtained from the Open Graph Benchmark (OGB)\footnote{\url{https://ogb.stanford.edu/}}. Dataset \textit{Reddit} is adopted from~\citep{zeng2019graphsaint} and downloaded from the PyTorch Geometric library\footnote{\url{https://github.com/pyg-team/pytorch_geometric}}, it is a sparser version of the original dataset provided by~\citet{hamilton2017inductive}.
We conform to the standard data splits defined by OGB or PyTorch Geometric.

\cm{Code Frameworks.}
The implementation of our \textbf{Sketch-GNN} is based on the PyTorch library and the PyTorch Sparse library\footnote{\url{https://github.com/rusty1s/pytorch_sparse}}.
More specifically, we implement the Fast Fourier Transform (FFT) and its inverse (used in tensor sketch) using PyTorch.
We implement count sketch of node features and convolution matrices as sparse-dense or sparse-sparse matrix multiplications, respectively, using PyTorch Sparse.
Our implementations of the standard GNNs are based on the PyTorch Geometric library.
The implementations of SGC~\citep{wu2019simplifying} and GraphSAINT~\citep{zeng2019graphsaint} are also adopted from PyTorch Geometric, while the implementations of VQ-GNN\footnote
{\url{https://github.com/devnkong/VQ-GNN}}
~\citep
{ding2021vq} and Coarsening\footnote
{\url{https://github.com/szzhang17/Scaling-Up-Graph-Neural-Networks-Via-Graph-Coarsening}}~\citep{huang2021scaling} are adopted from their official repositories, respectively.
All of the above-mentioned libraries (except for PyTorch) and code repositories we used are licensed under the MIT license.

\cm{Computational Infrastructures.}
All of the experiments are conducted on Nvidia RTX 2080Ti GPUs with Xeon CPUs.

\cm{Repeated Experiments.}
For the efficiency measures in~\cref{sec:experiments}, the experiments are repeated two times to check the self-consistency.
For the performance measures in~\cref{sec:experiments}, we run all the experiments five times and report the mean and variance.

\cm{Setups of GNNs and Training.}
On all of the three datasets, unless otherwise specified, we always train 3-layer GNNs with hidden dimensions set to 128 for all scalable methods and for the oracle ``full-graph'' baseline.
The default learning rate is 0.001.
We apply batch normalization on \textit{ogbn-arxiv} but not the other two datasets.
Dropout is never used.
Adam is used as the default optimization algorithm.

\cm{Setups of Baseline Methods.}
For SGC, we set the number of propagation steps $k$ in preprocessing to 3 to be comparable to other 3-layer GNNs.
For GraphSAINT, we use the GraphSAINT-RW variant with a random walk length of 3.
For VQ-GNN, we set the number of K-means clusters to 256 and use a random walk sampler (walk length is also 3).
For Coarsening, we use the Variation Neighborhood graph coarsening method if not otherwise specified. As reported in~\citep{huang2021scaling}, this coarsening algorithm has the best performance.
We use the mean aggregator in GraphSAGE and single-head attention in GAT.

\cm{Setups of Sketch-GNN.}
If not otherwise mentioned, we always set the polynomial order (i.e., the number of sketches) $r=3$.
An $L_2$ penalty on the learnable coefficients is applied with coefficient $\lambda$ ranging from 0.01 to 0.1.
For the computation of the triplet loss, we always set $\alpha$ to 0.1, but the values of $t_{+}>t_{-}>0$ are different across datasets.
We can find a suitable starting point to tune by finding the smallest inner product of vectors hashed into the same bucket.
To get the sampled subset $B$, we take the union of $0.01c$ buckets with the largest gradient norms for each sketch.
The LSH hash functions are updated every time for the first 5 epochs, and then only every $T_\text{LSH}=10$ epochs.
We do not traverse through all pairs of vectors in $B$ to populate $\gP_{+}$ and $\gP_{-}$.
Instead, we randomly sample pairs until $|\gP_{+}|, |\gP_{-}|>1000$.


\end{document}